\journal{Information Sciences}
\newtheorem{theorem}{Theorem}
\begin{document}

\begin{frontmatter}

\title{Improving the performance of bagging ensembles for data streams
through mini-batching}


\author[UFSCar]{Guilherme Cassales}
\author[UoW]{Heitor Gomes}
\author[UoW]{Albert Bifet}
\author[UoW]{Bernhard Pfahringer}
\author[UFSCar]{Hermes Senger\corref{mycorrespondingauthor}}
\ead{hermes@ufscar.br}

\address[UFSCar]{Federal University of São Carlos, Brazil}
\address[UoW]{University of Waikato, New Zealand}

\cortext[mycorrespondingauthor]{Corresponding author}

\begin{abstract}

Often, machine learning applications have to cope with dynamic environments where data are collected in the form of continuous data streams with potentially infinite length and transient behavior.
Compared to traditional (batch) data mining, stream processing algorithms have additional requirements regarding computational resources and adaptability to data evolution. They must process instances incrementally because the data's continuous flow {prohibits storing} data for multiple passes.
Ensemble learning achieved remarkable predictive performance in this scenario. Implemented as a set of (several) individual classifiers, ensembles are naturally amendable for task parallelism.
However, the incremental learning and dynamic data structures used to capture the concept drift increase the cache misses and hinder the benefit of parallelism.
{This paper proposes a mini-batching strategy that can improve memory access locality and performance of several ensemble algorithms for stream mining in multi-core environments. }
With the aid of a formal framework, we demonstrate that mini-batching can significantly decrease the reuse distance {(and the number of cache misses)}.
Experiments on six different state-of-the-art ensemble algorithms applying four benchmark datasets with varied characteristics show speedups of up to 5X on 8-core processors.
These benefits come at the expense of a small reduction in predictive performance.


\end{abstract}

\begin{keyword}
Multicore task-parallelism\sep data-stream learning\sep ensemble learners\sep bagging algorithms \sep Hoeffding tree.
\end{keyword}

\end{frontmatter}


\section{Introduction}
Machine Learning (ML) models have become fundamental for many applications in different domains. 
The development of novel ML algorithms tailored to specific problem constraints is motivated by the ML omnipresence in several domains. 
In many applications, learning algorithms have to cope with dynamic environments that collect potentially unlimited data streams.
Formally, a data stream $S$ is a massive sequence of data elements $x_1, x_2, \dots, x_n$ that is, $S={\{x_i\}}_{i=1}^n$, which is potentially unbounded (n → $\infty$) \cite{silva2013data}.
Compared to traditional (batch) data mining, stream processing algorithms have additional requirements. For instance, {storing data for late processing is not a choice due to memory constraints with a potentially infinite data stream. Algorithms need to process incoming data instances incrementally in a single pass while operating under memory and response time constraints. Furthermore, as data streams present transient behavior, prediction models often need to be incremented to adapt to concept drift observed in data. }

Ensemble learning comprises a class of machine learning algorithms that achieve remarkable predictive performance. However, {ensembles} are computationally more expensive, thus demanding more computational resources and likely violating time constraints.
Although parallelism can be a good strategy for performance gains, many algorithms typically require collective communication with high overhead~\cite{JavaParallelML}. 
Algorithms derived from the Bootstrap Aggregating (Bagging ensemble algorithm do not depend on collective communications. They are typically composed of several homogeneous types, weak and unstable, learners trained independently of each other in parallel with no communication overhead. However, since ensemble models implement different data structures ~\cite{gomes2017survey}, they are not amendable for data parallelism. Moreover, because such ensembles operate on dynamic data structures used to model concept drift and non-stationary data behavior, task parallelism is the natural way to implement them. In this scenario, memory access patterns and cache memory performance become one major challenge for the parallel implementation in multi-core environments.

To assess our strategy, we use MOA (Stream Learning framework Massive Online Analysis)~\cite{bifet2010moa}, a widely known tool that provides multiple implementations of streaming algorithms, including a vast collection of ensemble methods. The rationale for using MOA for the evaluation is twofold. First, it allows the seamless evaluation of our strategy on many ensemble algorithms by changing the order MOA invokes the learners' training. {Second, we remove potential biases in the evaluation by using reliable implementations of the algorithms as the baseline in our experiments and implementing our solution in the same environment. Notice that either optimizing specific ensemble algorithms or specific learners composing the ensembles are out of this work's scope.}

The main contributions of this paper can be summarized as follows: 
\begin{enumerate}
    \item Demonstrate that the incremental learning and dynamic data structures used to capture the concept drift increase the cache misses and hinder the benefit of parallelism in multi-core environments;
    
    \item Propose and assess a parallel implementation strategy based on mini-batches, which can significantly improve memory access locality and the performance of independent ensembles;

    \item Analyze the trade-off between the predictive performance and computational efficiency when using mini-batches. We observe that it is possible to considerably speed up the execution while having a small impact on the predictive performance in most cases;
    
    \item With further tests and improvements, our strategy can be implemented as a meta-classifier in MOA to evenly support the execution of ensemble methods composed of independent classifiers. 
\end{enumerate}

This work elaborates on previous work \cite{HPCC} in the following aspects: 


{($i$) The present work provides a better description of the performance bottleneck that motivated our proposal;}
($ii$) we  included  a  completely  new  discussion  on  the  background  and  theoretical foundations of memory performance, including the proof of some theoretical bounds on the reuse distance, which explain the performance gains of our proposal;
{($iii$) we provide a thorough discussion on the related work to position our contribution in two aspects: parallel strategies for the implementation of machine learning methods and the usage of mini-batching for performance optimization; }
{($iv$) we expand our experimental framework with two additional      algorithms:  Streaming Random Patches (SRP) and OzaBagASHT;}
{($v$) we  included  two  new  platforms  in  our  experimental  framework:   Xeon E5-2650 and AMD 7702;}
($vi$) we provide empirical analysis on how our solution reduces the reuse distance and improves data locality;
($vii$) we improved the analysis of the impact on predictive performance with more  detailed  measures  (precision  and  recall  instead  of  accuracy  only). We also reduced the granularity of the experiments, increasing from 3 to 7 mini-batch sizes;
($viii$) we  included  a  new  evaluation  on  change  detection  for  different  sizes  of mini-batch compared to the baseline.

This paper is organized as follows.
The main related works are discussed in {Section} \ref{sec:relatedwork}. 
Section \ref{sec:ensembles} describes some state-of-the-art ensemble algorithms.
{A performance bottleneck of a parallel implementation of bagging ensembles is dissected in Section \ref{sec:parallelization}. Section \ref{sec:proposal} presents our proposal of a mini-batching strategy.}
Formal background on memory locality and how mini-batch improves access locality
are presented in Section \ref{sec:background}.
In Section \ref{sec:experiments} we experimentally evaluate the performance of our mini-batching strategy. The impact of batch size on the performance and quality of prediction is evaluated in Section \ref{sec:batch-size}.
Finally, our conclusions are presented in Section \ref{sec:conclusions}.


\section{Related work}
\label{sec:relatedwork}

{Research on the parallelization of machine learning methods dates to the early 90s when most papers focused on batch ML methods that require the whole dataset in the main memory to train the model. }
In early  data mining, the batch learning approach is applied by processing the whole training data (one or multiple times) to output the decision models. Then, the decision models can be applied to new production data. {Such procedure} is usually referred to by batch learning, batch-mode algorithms \cite{silva2013data}, static data mining, and others in the literature. From now on, we will use batch learning to refer to  non-stream learning methods.

Over the years, with the increase in computational power, the focus shifted from single classifiers
\cite{Wang2016PerformanceAO}
to ensembles. 
In the context of ensemble learners, many studies have been made on MapReduce (MR) frameworks
\cite{COMETMR, planet, efficient-rf}, which are not suitable for applications with requirements of low response times, even being capable of processing huge amounts of data with high scalability \cite{senger2016bsp}. Another investigation approach is using GPUs to process ensembles \cite{LE2019294, rfgpu, onl-rf, adanet}{. However, GPUs are better for data-parallel problems.}

Among the works that explored multi-core parallelism,
distributed or not, we can further subdivide it {into batch} 
\cite{HOSVD, smsvm, frem, Hussain-FPGA, Tree-efficient, adanet}
or data stream \cite{two-stage, diego-bd, autotuning, grid-ensemble} methods.
{Many works with  various ensemble methods used the Message Passing Interface (MPI) standard, such as for ensembles of improved and faster Support Vector Machine (SVM)~\cite{smsvm},
bagging decision rule ensembles~\cite{two-stage} and regression ensembles~\cite{autotuning}. 
The remaining literature presents a miscellaneous of tools and different scopes.}
In~\cite{HOSVD}, multi-classifiers are implemented using OpenMP.
In~\cite{frem}, an ensemble of SVMs is implemented using joblib (a Python multiprocessing lib) and scikit-learn.
In~\cite{adanet}, TensorFlow is used to build a scalable and extensible framework for ensembles parallelization.
In~\cite{Tree-efficient}, an efficient Random Forest (RF) implementation that improves memory access due to better data representation on machines that combine both shared and distributed memory is proposed and implemented using FREERIDE (previous work from the authors).
In~\cite{grid-ensemble}, an ensemble of J48 is parallelized for grid platforms using Java.
In~\cite{diego-bd}, a low-latency Hoeffding Tree (HT) is implemented in C++ and used in RFs.
{In general, the related works mentioned so far differ from the present work in two main aspects: focusing on the implementation and performance aspects of specific ensemble methods or batch approaches (i.e., they do not focus on stream processing). }

In more closely related work, Horak et al.~\cite{two-stage} study the impact of concurrency on memory access pattern and performance of ensembles. They proposed a two-stage bagging architecture that combines single-class recognizers with two-class discriminators to improve accuracy and allow parallel processing. They also addressed load balancing for the parallel classifier construction and used the algorithm SCALLOP as the base for the experiments to validate the architecture implemented in MPI. 
{Martinovic et al.~\cite{autotuning} enhanced a dynamic auto-tuning framework in a distributed fashion by using two strategies. They use a scalable infrastructure capable of leveraging the parallelism of the underlying platform for ensemble models to speed up the predictive capabilities while iteratively gathering production data. Results show that the approach implemented in MPI can learn the application knowledge by exploring a small fraction of the design space. }
{Qian et al.~\cite{grid-ensemble} propose a novel ensemble for data stream classification. This solution maps different raw data to multiple grids, where the first-order geometric center is used to represent and classify data. This method performs data compression, which increases the accuracy and computational efficiency. It was implemented in Java and tested in both multi-core and grid environments.}

Marrón et al.~\cite{diego-bd} propose an implementation of RF-based on vector SIMD instructions and changes the representation of Binary Hoeffding Trees (HT) to fit into the L1 cache. It was implemented in C++ and benchmarked against MOA and StreamDM using two real and eleven synthetic datasets. It is noteworthy that the authors compare the performance of a single tree and the ensemble using different hardware architectures.
Actually, these last four works are more closely related to the present work as they approach the performance of ensembles in the context of data streams. However, they are different from the present work in the following aspects. 
The works in~\cite{two-stage,diego-bd} focus on specific algorithms, SCALLOP and Binary HT, respectively. The work in~\cite{autotuning} leverages parallel processing to improve the {algorithm's parameters}, while~\cite{grid-ensemble} is focused on data compression.

The summary of related work regarding the parallelization of Machine Learning methods is shown in Table \ref{tab:related-parallel}.

\begin{table}[ht]

\advance\leftskip-1,5cm
\begin{footnotesize}
\label{tab:related-parallel}
\caption{Summary of related works in parallelized machine learning methods. }
\begin{tabular}{r|l|l|l|l}
Reference & Tool & Method & Algorithm & Platform \\ \hline
Basilico et al.~\cite{COMETMR} & MR & Batch & Ensemble RF & Hadoop \\
Ben-Haim et al.~\cite{ben-haim10a} & MPI & Stream & Single model & Distributed \\
Cyganek et al.~\cite{HOSVD} & TensorFlow OpenMP & Batch & Multi-classifier (1-13) ensemble  & Multi-core \\
Hajewski et al.~\cite{smsvm} & MPI & Batch & Ensemble SmothSVM & Distributed \\
Horak et al.~\cite{two-stage} & MPI & Stream & Bagging of SCALLOP & Multi-core \\
Hoyos-Idrobo et al.~\cite{frem} & Sci-kit learn joblib & Batch & Ensemble SVM & Multi-core \\
Hussain et al.~\cite{Hussain-FPGA} & FPGA & Batch & Single model & FPGA, Multi-core \\
Jin et al.~\cite{Tree-efficient} & - & Batch & RF & Distributed \\
Liao et al.~\cite{rfgpu} & PyCUDA Parakeet & Batch & RF & GPU \\
Marrón et al.~\cite{diego-bd} & MPI CPP & Stream & RF of binary trees & Multi-core \\
Martinovic et al.~\cite{autotuning} & MPI & Stream & Ensembles Regression & Distributed \\
Panda et al.~\cite{planet} & MR & Batch & RF & Hadoop \\
Qian et al.~\cite{grid-ensemble} & Java & Stream & Ensemble J48 & Distributed \\
Saffari et al.~\cite{onl-rf} & - & Stream & RF & GPU \\
Wang et al.~\cite{Wang2016PerformanceAO} & - & Batch & Ensemble KNN & Multi-core \\
Weill et al.~\cite{adanet} & TensorFlow & Batch & Ensemble TensorFlow & Distributed \& GPU \\
Xavier et al.~\cite{efficient-rf}& - & Batch & RF & Spark \\

\end{tabular}
\end{footnotesize}
\end{table} 

\subsection{Other mini-batching approaches}
So far, we have discussed related works that present similar motivations to the present work, i.e., optimizing machine learning algorithms and ensembles that focus on processing streaming data. Next, we discuss other related studies that optimize the performance of machine learning applications using some form of mini-batching or approaches related to the strategy proposed in the present work.

{In summary, mini-batching consists of processing small chunks containing several data instances to be processed at once instead of processing a single instance at a time. }

Variations of mini-batching have been employed with different approaches. For instance, stream processing systems such as Spark and Flink group data in small batches to improve performance and fault-tolerance~\cite{carbone2015apache, zaharia2012discretized}. 
Wang et al.~\cite{wang2012energy} proposed a  scheduling strategy to find energy-optimal batching periods for real-time tasks with deadline constraints to execute on heterogeneous sensors. 
He et al. \cite{he2010comet} proposed Comet, a stream processing system that identifies the optimal sizes of batches of data items to be processed for large-scale data streams. The proposal is based on a model named {\it Batched Stream Processing} (BSP) that focuses on modeling recurring (batch) computations on incrementally bulk-appended data streams. {Despite some similarities, this work focuses on reusing input data and intermediate results to reduce recomputing and I/O redundancies that cause bandwidth waste.}
Similar techniques that group work units into small batches have been used in other application areas, such as in web search engines~\cite{Bonacic:2015, gaioso2019performance}.
In summary, these works use mini-batching for grouping processing units into larger ones that increase the utilization of resources in multi-core or distributed processing systems. 

Similar mini-batch approaches can be used in many inversion problems. For instance, Kukreja et al.~\cite{gmd-2020-325} use mini-batches to propose a new method that combines check-pointing with error-controlled lossy compression for large-scale high-performance inversion problems. 
The method reduces movement, allowing a reduction in run time as well as peak memory. 
{In general, such methods use mini-batching to balance the amount of information used and computational costs for the optimization process used for training the learners.
This approach is different from our work, which focuses on mini-batching for improving memory access locality. }

Zhang et al.~\cite{9052125} proposed two scheduling strategies to reduce both the delay and energy consumption of executing small batches of Deep Neural Networks (DNN) tasks on edge nodes such as IoT devices. {Although the strategies also apply to CPUs, the proposal focuses on executing DNN applications on GPU devices in the edge.}
Their focus is to optimize resource utilization at the edge of the network.

{Wen et al.~\cite{Wen2020BatchEnsemble} propose a method that optimizes ensembles of Artificial Neural Networks (ANNs) and whose computational and memory costs are significantly lower than typical solutions. Such cost reduction is achieved by defining each weight matrix as the Hadamard product of a shared weight among all ensemble members and a rank-one matrix per member.} Their method yields competitive accuracy and uncertainties as typical ensembles, achieving 3X speedups at test time and 3X less memory for ensembles of 4 learners. 
This work focuses only on NN ensembles to classify image datasets.

The summary of related work regarding the parallelization of Machine Learning methods is shown in Table \ref{tab:related-batch}.

\begin{table*}[ht]
\advance\leftskip-1,5cm
\begin{footnotesize}
\label{tab:related-batch}
\caption{Summary of related works that used mini-batches}
\begin{tabular}{r|l|l}
Reference & Objective & Environment \\ \hline
Bonacic et al.~\cite{Bonacic:2015} & Increase the utilization of resources & Web search engines \\
Carbone et al.~\cite{carbone2015apache} & Fault tolerance and performance & Apache Flink \\
Gaioso et al.~\cite{gaioso2019performance} & Increase the utilization of resources & Web search engines \\
He et al. \cite{he2010comet} & Reduce recomputing and IO redundancies & Large-scale data streams \\
Kukreja et al.~\cite{gmd-2020-325} & Reduce data movement & Large-scale FWI \\

Wang et al.~\cite{wang2012energy} & Energy optimization & Real-time tasks on heterogeneous sensors \\
Wen et al.~\cite{Wen2020BatchEnsemble} & Reduce data on weight matrix & ANNs for image classification \\
Zaharia et al.~\cite{zaharia2012discretized} & Fault tolerance and performance & Apache Spark \\
Zhang et al.~\cite{9052125} & Reduce delay and energy consumption & DNNs on the edge \\


\end{tabular}
\end{footnotesize}
\end{table*}

\subsection{How our work is different from others}

Although several parallel ensemble algorithms have been proposed,
methods focusing on their efficient (parallel) implementation are seldom approached in the related literature. 
In particular, studies of memory access locality for improving the performance of ensembles are rarely approached.
To date, this is the first work to propose a strategy for improving memory access locality for parallel implementation of bagging ensembles on multi-core systems. Our work employs both measurement techniques and theoretical foundations proposed in \cite{yuan2019relational} to demonstrate the benefits of mini-batching for the implementation of ensembles. 

The present work is different from previous work as we focus on a class of ensemble algorithms composed of bagging ensembles executing in the context of data streams. Furthermore, our proposal is orthogonal to any optimization and parallel implementation of a specific learning algorithm within the ensemble like the proposals found in \cite{acceleration-cascade, pensemble-fuzzy, fastforest}.  {Being orthogonal, the mini-batching approach for ensemble optimization can be combined with other parallelization/optimization strategies that focus on specific learner algorithms within the ensemble, with potential benefits for each other. This combination, however, is out of this work's scope. }
\section{Bagging ensembles}
\label{sec:ensembles}

Bagging is one of the most used ML techniques to improve the accuracy of several weak models. {Although it was proposed over 20 years ago, Bagging and its variants (e.g., Random Forest) are still used to this day as an alternative to intricate models, such as deep neural networks, that can be challenging to train and fine-tune.}
In contrast to Boosting, Bagging does not create dependency among the base models, {facilitating the parallelization  of its execution and processing incoming data online.} Besides that, Bagging variants yield higher predictive performance in the streaming setting than Boosting or other ensemble methods that impose dependencies among its base models.
This phenomenon was observed in several empirical experiments \cite{Levbag, ozabagadwin, Gomes2017, OzaBag}. {One hypothesis to explain this phenomenon is the difficulty of effectively modeling the dependencies in a streaming scenario, as noted in \cite{gomes2017survey}.} 

Next, we present a summary description of six ensemble algorithms that evolved from the original Bagging to an online setting by Oza and Russeal~\cite{OzaBag}. We will demonstrate our mini-batching strategy on these algorithms in Section \ref{sec:proposal}.  

\textbf{Online Bagging (OzaBag - OB)}~\cite{OzaBag} is an incremental adaptation of the original Bagging algorithm. The authors demonstrate how the process of bootstrapping can be adapted to an online setting using a Poisson($\lambda=1$) distribution. In essence, instead of sampling with replacement from the original training set, in Online Bagging, the Poisson($\lambda=1$) is used to assign weights to each incoming instance, such that these weights represent the number of times an instance will {be} `repeated' to simulate bootstrapping. One concern with using $\lambda=1$ is that about $37\%$ of the instances will receive weight 0, thus not being used to train. Leaving instances out of the training set is required to approximate OzaBag to the offline version of Bagging but may be detrimental to an online learning setting~\cite{gomes2017survey}. Therefore, other works~\cite{Levbag,Gomes2017} increase the number of times an instance is used for training by increasing the $\lambda$ parameter.

\textbf{OzaBag Adaptive Size Hoeffding Tree (OBagASHT)} \cite{ozabagadwin} combines the OzaBag with Adaptive-Size Hoeﬀding Trees (ASHT). The new trees have a maximum number of split nodes and some policies to prevent the tree from growing bigger than this parameter (i.e. deleting some nodes). This algorithms' objective was to improve predictive performance by enforcing the creation of different trees. Effectively, diversity is created by having different reset-speed trees in the ensemble, according to the maximum size. The intuition is that smaller trees can adapt more quickly to changes, and larger trees can provide better performance on data with little to no changes in distribution. Unfortunately, in practice, this algorithm did not outperform variants that relied on other mechanisms for adapting to changes, such as resetting learners periodically or reactively~\cite{gomes2017survey}. 

\textbf{Online Bagging ADWIN (OBADWIN)}~\cite{ozabagadwin} combines OzaBag with the ADAptive WINdow (ADWIN) change detection algorithm. 
When a change is detected, a new classifier replaces the classifier with the worst predictive performance. 
{ADWIN keeps a variable-length window of recently seen items. The property that the window has the maximal length is statistically consistent with the hypothesis that there has been no change in the average value inside the window. It implies that} at any time, the average over the existing window can be reliably taken as an estimation of the current average in the stream, except for a very small or very recent change that is still not statistically visible.

\textbf{Leveraging Bagging (LBag)}~\cite{Levbag} extends OBADWIN by increasing the $\lambda$ parameter of the Poisson distribution to $6$, effectively causing each instance to have a higher weight and be used for training more often. 
In contrast to OBADWIN, LBag maintains one ADWIN detector per model in the ensemble and independently resets the models. 
This approach leverages the predictive performance of OBADWIN by merely training each model more often (higher weight) and resetting them individually. However, since in LBAG more training is involved, LBAG requires more memory and processing time than OB and OBADWIN. 
In \cite{Levbag}, the authors also attempted to further increase the diversity of LBag by randomizing the {ensemble's output} via random output codes. However, this approach was not very successful {compared} to maintaining a deterministic combination of the models' outputs. 

\textbf{Adaptive Random Forest (ARF)} is an adaptation of the original Random Forest algorithm to the streaming setting. Random Forest can be seen as an extension of Bagging, where further diversity among the base models (decision trees) is obtained by randomly choosing a subset of features to be used for further splitting leaf nodes. 
ARF uses the incremental decision tree algorithm Hoeffding tree~\cite{HOEFFDING_TREE} and simulates resampling as in LBag, i.e., Poisson($\lambda=6$). The Adaptive part of ARF stems from change detection and recovery strategy based on detecting warnings and drifts per tree in the ensemble. After a warning is signaled, {another model is created (a `background tree') and trained without affecting the ensemble predictions.} If the warning escalates to a drift signal, the associated tree is replaced by its background tree. Notice that in the worst case, the number of tree models in ARF can be at most double the total number of trees {due to the background trees. However, as noted in \cite{Gomes2017}} the co-existence of a tree and its background tree is often short-lived. 

\textbf{Streaming Random Patches (SRP)} \cite{SRP} is an ensemble method {specially adapted to stream classification, which combines random subspaces and online bagging.} SRP is not constrained to a specific base learner as ARF since its diversity inducing mechanisms are not built-in the base learner, i.e., SRP uses global randomization while ARF uses local randomization. 
{Even though \cite{SRP} all the experiments focused on Hoeffding trees and showed that SRP could produce deeper trees, which may lead to increased diversity in the ensemble. }

All the ensembles use a Hoeffding Tree (HT)~\cite{HOEFFDING_TREE} as a base model. An HT is an incremental tree designed to cope with massive stationary data streams. Thus, it can do splits with reasonable confidence in the data distribution while having very few instances available. This is possible because of the Hoeffding bound, which quantifies the number of observations required to estimate statistics. This guarantees that the higher the number of instances, the more similar to non-incremental trees its model gets.

\section{Parallelization of bagging ensembles}
\label{sec:parallelization}

{In this Section,} we show a straightforward parallelization of bagging ensembles. The objective is to identify its main performance bottleneck, which motivated our proposal. Although all the learners that compose an ensemble may be homogeneous in type, each one has its own (and different) model. For instance, all the learners can be implemented by {an HT, but each tree may grow in a different shape (which can change over time).} For this reason, ensemble algorithms are not amendable for data parallelism in which the same instruction operates simultaneously over different data instances.
On the other hand, task parallelism can naturally be applied as the underlying classifiers in bagging ensembles execute independently from each other and without communication.

Algorithm \ref{alg:highlevel-new} describes a task-parallel-based implementation.
This version improves the performance of the current parallel implementation of the ARF algorithm~\cite{Gomes2017}, in the latest version in MOA~\cite{bifet2010moa}, by reusing the data structures and avoiding the costs of allocating new ones for every instance to be processed.

\begin{algorithm}
  \caption{{High-level} parallel algorithm}
  \label{alg:highlevel-new}
  \begin{algorithmic}[1]
    \State {\bf Input}: an ensemble $E$, $num\_threads$, a data stream $S$
    \State $P \gets Create\_service\_thread\_pool(num\_threads)$
    \State $T \gets Create\_trainers\_collection(E)$
    \For {each arriving instance $I$ in stream $S$}
    \State $E$.classify($I$)
    \For  {each trainer $T_i$ in trainers $T$} 
    \State $k \gets poisson(\lambda)$
    \State $T_i.update(I, k)$
    \EndFor
    \For {all trainers $T$} {\bf in parallel}
    \State $W\_inst \gets I * k$
    \State $Train\_on\_instance(W\_inst)$
    \EndFor
    \If {change detected}
    \State $reset\_classifier$
    \EndIf
    \EndFor
  \end{algorithmic}
\end{algorithm}

In lines 2-3, a thread pool is started, and one Trainer (runnable) is created for each {ensemble classifier}.
For each arriving data instance (lines 4-17), votes from all the classifiers are obtained (line 5). 
{Then, the \textit{Poisson} weights are computed, and trainers are updated in lines 6-9. }
Although these steps (6 -- 9) could be {parallelized}, we chose to run sequentially for  two reasons.
First, this part corresponds to only 3\% of the computational effort for the algorithms studied in this article. 
{In addition, we chose to keep the same weights used in the baseline (sequential) algorithm to use the same random numbers in the calculation of weights. 
Maintaining the  weights will be important  (in Section \ref{sec:batch-size}) to  compare} the predictive performance of the parallel and the baseline  algorithms.

On the other hand, the training phase is more expensive. {It involves updating statistics on each tree's nodes, calculating new splits, and detecting data distribution changes (for all methods except OzaBag).}
As the training phase dominates the computational cost, parallelism is implemented (in lines 10-13) by simultaneously training many classifiers.
Finally, lines 14-16 represent the global change detector, present only on OBAdwin, where the ensemble's worst classifier will be replaced with a brand new one.

We use Java {to reuse several bagging ensembles implemented in the MOA framework}, which is widely used and tested. By reusing MOA algorithms, we provide a seamless and reproducible evaluation of our proposal, {with the added benefit of being used for many studies in the area \cite{bifet2010moa}.}
{Although Java} does not focus on implementing either energy-efficient or high-performance applications, the work in \cite{minho} shows that Java is in the top 5 languages (out of 27 tested) that need less energy and time to execute the applications.

The implementation is made in Java  using the framework ExecutorService, which implements the Fork-Join abstraction for expressing parallelism. 
The framework is available since Java 7.
It has methods to track the progress of a task and manage its termination. 
This framework's main goal is to facilitate thread management {by creating a Service with a fixed thread pool size, reserving and reusing these threads}.
Once a service has been created, tasks can be invoked by passing Runnables for it.

In essence, Fork-Join is composed of three steps: fork, computation, and join. 
In the fork step, new threads are created on-demand. 
Then, in the computation step, each thread executes one or more tasks. 
Finally, in the join step, the parallel threads synchronize and finish before continuing the program's sequential region. 
This fork-compute-join process can be repeated many times during the execution of a program. 
Fork-Join implementations usually employ thread pools that support forked task management to reduce thread creation/destruction overhead. 
These pooled threads are not destroyed when the task finishes but instead release resources and become idle~\cite{JavaParallelML}.

Although environments such as OpenMP or MPI provide remarkable support for the parallel implementation of \textit{ad hoc} algorithms, this approach is out of the scope of this study because our focus is not to optimize a specific algorithm.
Instead, the objective is to assess mini-batching as {an implementation strategy for a group of streaming algorithms already implemented in MOA.} Furthermore, Java presents {a} good performance in terms of execution time and energy efficiency for our purpose, as shown in \cite{minho}. 
The hardware used for experiments is described in Table \ref{tab:specs}.
Experiments were carried out in a dedicated environment.
Execution time is measured as wall clock time, including prediction and training steps.
We present an average of three executions for each configuration.

\begin{table}[ht]
\centering

  \caption{ Hardware specifications}
  \vspace{0,15cm}
  \label{tab:specs}
  \begin{tabular}{r|ccc}
  
    Processor        & Silver 4208 & E5-2650 & AMD 7702 \\
    \hline
    Cores/socket  & 8  & 10 & 64 \\
    Clock frequency (GHz)  & 2.1 & 2.3 & 2.0 \\
    \hline
    L1 cache (core)   & 32 KB & 32 KB & 32 KB \\
    L2 cache (core)   & 1024 KB & 256 KB & 512 KB \\
    L3 cache (shared) & 11264 KB & 25600 KB & 262144 KB\\
    \hline
    Memory (GB)    & 128  & 384 & 1024 \\
    Memory channels   & 6 &  4 & 8\\
    Maximum  bandwidth & 107.3 GiB/s & 51.2 GB/s & 204.8 GB/s \\
\end{tabular}
\end{table}

The four datasets used in the experiments are open access \footnote{Available at https://github.com/hmgomes/AdaptiveRandomForest}, and a summary of their characteristics are shown in Table \ref{tab:datasets}. A short description of each dataset is provided below:
\begin{table}[htpb]
    \centering
    \caption{Summary of dataset statistics}
    \label{tab:datasets}
    \begin{tabular}{r|cccc}
         Datasets & Airlines & GMSC & Electricity & Covertype \\ \hline
         \# of instances & 540k & 150k & 45k & 581k \\
         \# of features & 7 & 10 & 8 & 54 \\
         \# of nominal features & 4 & 0 & 1 & 45 \\
         Normalized & No & No & Yes & Yes \\
    \end{tabular}

\end{table}
\begin{itemize}
    \item  {The regression dataset from Ikonomovska inspired the Airlines dataset.} The task is to predict whether a given flight will be delayed, given information on the scheduled departure. Thus, it has 2 possible classes: delayed or not delayed.
    
    \item The Electricity dataset was collected from the Australian New South Wales Electricity Market, where prices are not fixed. These prices are affected by the demand and supply of the market itself and {are} set every 5 min. The Electricity dataset tries to identify the price changes (2 possible classes: up or down) relative to a moving average of the last 24h. An important aspect of this dataset is that it exhibits temporal dependencies.
    
    \item The give me some credit (GMSC) dataset is a credit scoring dataset where the objective is to decide whether a loan should be allowed. This decision is crucial for banks since erroneous loans lead to the risk of default and unnecessary expenses on future lawsuits. The dataset contains historical data on borrowers.

    \item The forest Covertype dataset represents forest cover type for 30 x 30 m cells obtained from the US Forest Service Region 2 resource information system (RIS) data. Each class corresponds to a different cover type. The numeric attributes are all binary.
    Moreover, there are 7 imbalanced class labels.

\end{itemize}

As depicted in  Figure \ref{fig:par-spup}, most experiments present either a negative or a negligible improvement in speedup.
Although the parallel implementation can train  several classifiers in parallel, {this operation requires few calculations. At the same time, it }demands reading and writing large data structures in memory, thus producing a significant amount of cache misses, as shown in Table \ref{tab:cache-miss-parallel}. 
Actually, parallelism increases the demand for memory bandwidth as multiple cores can execute different classifiers, filling the caches with their respective data structures. Such demand creates a bottleneck in the memory system, which hinders performance.   
The results presented here are consistent {with} previous studies reported in the literature (e.g., in \cite{Gomes2017}).

In summary, it demonstrates that the parallelism {\it per se} cannot improve the performance of streaming algorithms. {To alleviate the bottleneck discussed here and improve the performance and scalability of bagging ensembles, the next Section} proposes a strategy that can increase the reuse of large data structures in the cache.

\begin{figure}[h]
    \centering
    \advance\leftskip-2cm
    \includegraphics[width=1.3\textwidth]{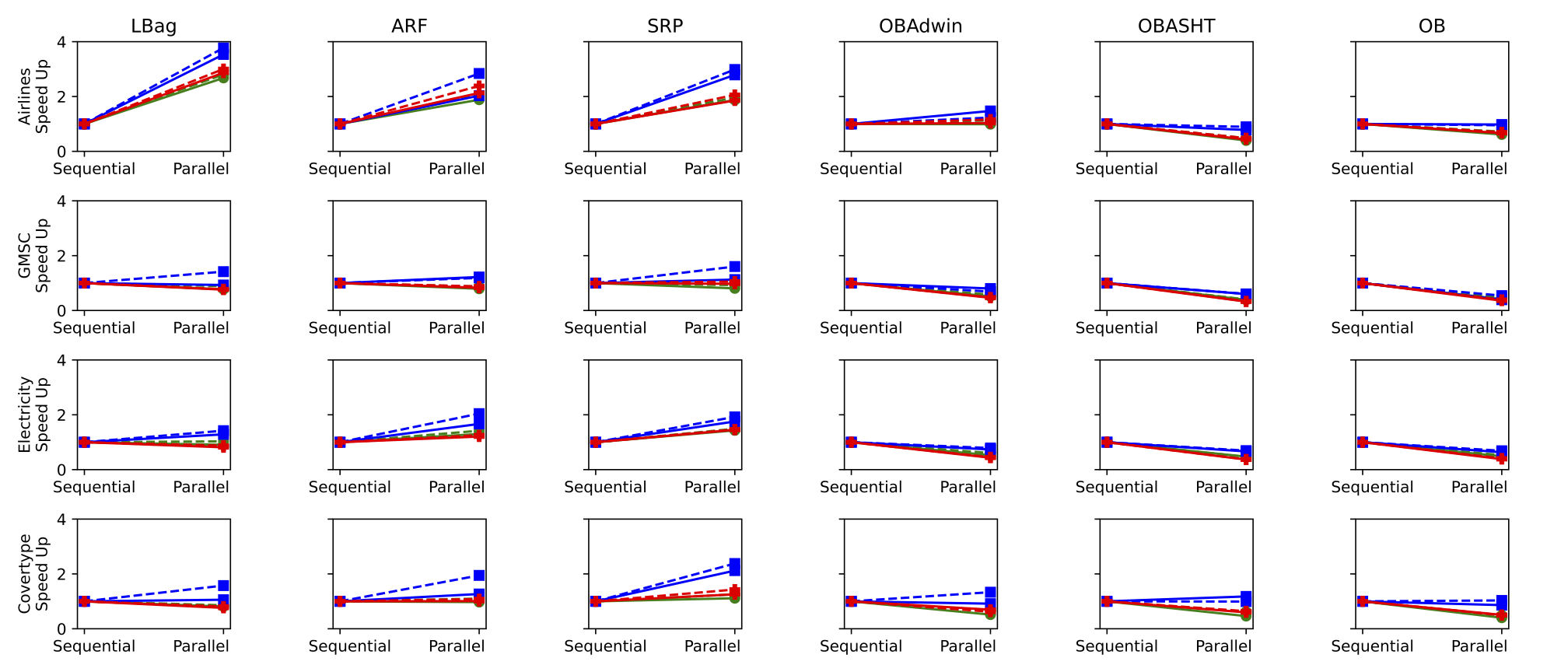}
    \caption{{\bf Speed up with 8 threads for all platforms}. Algorithms are placed on columns, while datasets are placed in different rows of the grid. {Suffix 100 and 150 indicate the size of the ensemble and are represented with solid and dashed lines, respectively.} All Y-axis are scaled to 4. Algorithm implementations: ($i$) Baseline (\textbf{Sequential}), ($ii$) Parallel  (\textbf{Parallel}).
    }
    \label{fig:par-spup}
\end{figure}


\begin{table}[ht]

\advance\leftskip-0,5cm
\begin{footnotesize}
\caption{Measures of cache {usage} for ensembles with 100 learners (Millions)}
\label{tab:cache-miss-parallel}
\begin{tabular}{c|p{1,0cm}p{1,0cm}|p{1,0cm}p{1,0cm}|p{1,0cm}p{1,0cm}|p{1,0cm}p{1,0cm}}
&  \multicolumn{2}{c}{Airlines} & \multicolumn{2}{c}{GMSC} & \multicolumn{2}{c}{Electricity} & \multicolumn{2}{c}{Covertype} \\
Algorithm & cache-miss & cache-refer & cache-miss & cache-refer & cache-miss & cache-refer & cache-miss & cache-refer \\
\hline
ARF & 40,171 & 94,910 & 2,518 & 11,366 & 882 & 4,490 & 12,652 & 65,321 \\
\hline
LBag & 45,337 & 99,010 & 2,600 & 8,962 & 508 & 2,870 & 17,809 & 104,735 \\
\hline
SRP & 45,135 & 110,900 & 5,543 & 18,487 & 2,105 & 7,520 & 65,157 & 172,089 \\
\hline
OBASHT & 4,779 & 39,986 & 531 & 4,399 & 225 & 1,714 & 5,927 & 101,370 \\
\hline
OBAdwin & 26,627 & 71,987 & 723 & 5,770 & 232 & 2,037 & 5,780 & 108,281 \\
\hline
OB & 9,423 & 27,560 & 981 & 5,580 & 221 & 1,864 & 11,314 & 94,976 \\

\end{tabular}
\end{footnotesize}
\end{table}


\section{Improving memory locality through mini-batching}
\label{sec:proposal}


Although task parallelism looks straightforward for {implementing ensembles}, bad memory usage can severely hinder their performance. 
For instance, high-frequency access to data structures larger than cache memories can raise severe performance bottlenecks. We can observe that parallelism increased the cache contention, thus increasing the number of cache misses, which explains the loss of speedup in some experiments.  


One strategy for {alleviating} this memory bottleneck is to improve the data reuse of the classifiers. {Such data reuse can be improved by processing more than one instance at a time so that the data structured loaded into the cache can be reused for processing a group of instances. }
In the present work, we use the term mini-batches to refer to a group of instances that will be processed at once by each {ensemble classifier}.
Thus, the mini-batching strategy {aims to reduce cache misses by improving cache data reuse.}

Figure \ref{fig:ensemble-figure} presents a simplified view of the ensemble working. 
The mini-batch is replicated to each learner in the ensemble, which outputs the predictions of the whole mini-batch.
{After that, there is an aggregation of the predictions} to output the final prediction of the whole ensemble.
Then, the learners use the same mini-batch to update their models. This way, the training is deferred to the end of the processing of a mini-batch.

\begin{figure}
    \centering
    \includegraphics[width=\textwidth]{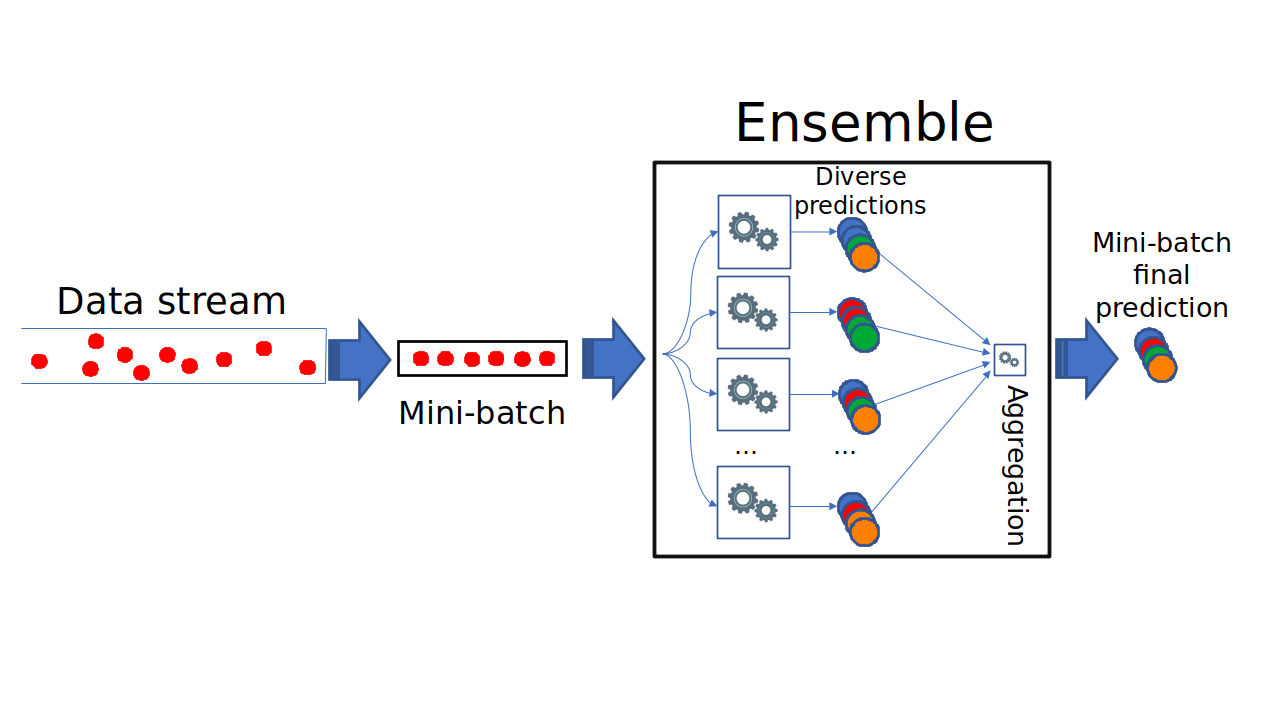}
    \caption{A simplified example of the ensemble working with mini-batch in parallel.}
    \label{fig:ensemble-figure}
\end{figure}

The introduction of mini-batching does not break the single-pass requirement for operating on data streams, as each instance will continue to be processed only once then discarded. However, the mini-batching  
defers the update of the models to the end of the mini-batch {(i.e., training starts after all the instances of the mini-batch have been classified)}.

Each learner is implemented by a task that processes training by iterating on each instance of a mini-batch instead of processing a single instance at a time. When a learner is invoked, its data structures are loaded into the upper levels of the memory hierarchy (upper-level caches) and quickly accessed to process the next instances in the same mini-batch, reducing cache misses and improving performance. 
Next, we propose mini-batching for improving the memory locality of the parallel portion of the code. 

The  Algorithm~\ref{alg:batch} shows the mini-batching strategy.
\begin{algorithm}
  \caption{mini-batching algorithm}
  \label{alg:batch}
  \begin{algorithmic}[1]
    \State {\bf Input}: an ensemble $E$, $num\_threads$, a data stream $S$, mini-batch size $L_{mb}$
    \State $P \gets Create\_service\_thread\_pool(num\_threads)$
    \State $T \gets Create\_trainers\_collection(E)$
    \For {each arriving instance $I$ in stream $S$}
    \State $B.append(I)$
    \If{$B.size() == L_{mb}$}
    \State $E$.classify($B$)
    \For  {each trainer $T_i$ in trainers $T$} 
    \State $T_i.instances.append(B)$
    \EndFor
    \For {all trainers $T$} {\bf in parallel}
    \For {each instance $I$ in $B$}
    \State $k \gets poisson(\lambda)$
    \State $W\_inst \gets I * k$
    \State $Train\_on\_instance(W\_inst)$
    \EndFor
    \If {change detected}
    \State $reset\_classifier$
    \EndIf
    \State $B.clear()$
    \EndFor
    \EndIf
    \EndFor
  \end{algorithmic}
\end{algorithm}
The first difference between the two algorithms appears in lines 5-6 of the Algorithm \ref{alg:batch}, where the ensemble will only accumulate the instances until the desired mini-batch size is met or the stream ends. When this condition is fulfilled, the algorithm performs the classification (line 7) and training (lines 8-21). In line 9, the whole mini-batch is copied to each trainer, and the weight is left to be calculated inside each trainer. Line 11 has a parallel loop that executes the trainer in parallel. Then, each trainer will iterate (sequentially) over all mini-batch instances while calculating the weight, creating the weighted instance, and training the classifier with this instance (lines 12-16). ARF and LBag, exclusively, will execute lines 17-19 as a local change detector for each classifier in the ensemble. Finally, in line 20, the instances are discarded and the buffer is flushed to begin accumulating a new mini-batch.
In OBAdwin, lines 17-19 would be outside the parallel section, {as change detection is global}.

Notice that each prediction model usually is several times larger than a single instance, and the size of the whole ensemble (almost always) overcomes the cache memory by far. Thus, it is significantly more expensive to load one model into memory than loading one instance. 
Algorithm \ref{alg:highlevel-new} loads one sample (usually small) in memory, and iterates over the ensemble classifiers (significantly larger), {thus incurring a high number of cache misses}. 
In contrast, the mini-batching fixes one ensemble model in memory, then iterates over the mini-batch samples, thus improving memory access locality by reducing the cache misses. 

Ideally, the size of ensemble models should fit into the cache memories for optimal performance. However, this is not the case for current streaming bagging ensembles  (e.g., the ones considered in this article). Actually, because the entire ensembles typically overcome the size of caches by far, the mini-batching can {significantly reduce} the number of cache misses. {Such reduction  explains} the performance gains achieved with mini-batching. 



\section{Background and preliminaries}
\label{sec:background}

Memory locality is a fundamental property and design principle for optimizing the performance of hardware, software, and algorithms \cite{yuan2019relational}. 
Locality can be defined as ``the tendency for programs to cluster references to subsets of address space for extended periods" \cite{denning2015great}.
Due to the increasing gap in processor and memory speeds \cite{jacob2010memory},
{the locality has played} a central role in optimizing the performance of operating systems over the decades.
Locality may be defined in many ways, and several metrics related to it have been proposed \cite{yuan2019relational}. 
We can use the notion of \textit{reuse distance} (RD) to evaluate how mini-batching can improve the performance and resource (i.e., cache) sharing of bagging ensembles implementations. 
We chose these definitions because they are based on direct measurements, do not depend on idealistic assumptions, and are extensions of observational stochastic \cite{yuan2019relational}.
Our goal is to reduce cache misses by improving data reuse in parallel implementations of ensembles.

From a historical perspective, memory locality has been studied over decades to optimize the memory hierarchy, operating systems, software, and algorithms design 
with recent advances in measurement techniques \cite{Ibrahim2010},
trace generation \cite{shen2008scalable},
and formal modeling   \cite{Maeda2017}. 
{In recent work} \cite{yuan2019relational}, 
Yuan et al. built upon previous works by proposing the \textit{relational theory of locality} (RTL), a theoretical framework that unifies several memory locality measures used along five decades of study and research in the field. 
RTL provides mathematical background and categorizes the measures in three different types of locality. The authors showed how such measures relate to each other and whether and how they can be inter-converted. 


Next, we discuss {the} memory locality of a stream processing system operating according to the algorithms described in the previous section. Each algorithm implements an ensemble $L$, composed by a set of learners $l_i \in E$. We refer to an individual learner as $l_i$ $(1\leq i \leq |E|)$.
A {\it stream} $S$ is a countably infinite set of data elements $s \in S$. Each stream element s:⟨v, t⟩ consists of a relational tuple $v$ conforming to some schema, with an application time value $t_{i} \in T$. We assume that the time domain $T$ is a discrete, linearly ordered, countably infinite set of time instants $t \in T$.  As the stream is potentially infinite, we assume that $T$ is bounded in the past but not necessarily in the future.
Thus, due to memory limitations and response time constraints, the algorithms need to incrementally process incoming data elements in a single pass, performing both classification and training as data elements arrive.

A \textit{trace} $N$ is a sequence of references to data or memory locations denoted by $N = mt(1,\dots,n)$, where $n$ is the trace's length.
A trace can access a set of $m$ distinct memory addresses, while the set of distinct memory addresses is be denoted by $M = {e_1,\dots,e_m}$, where $m$ is the number of distinct memory addresses in $M$. 
The model allows abstracting from any granularity issues so that a data item may be either a variable, a data block, a page, or an object.
For illustration, we can use some trace examples composed of just three data elements $a,b,c$, including those repeating them once in the same order (i.e., \textit{abc abc}), in the opposite order (i.e., \textit{abc cba}), or repeating them indefinitely (i.e., \textit{abc abc \dots}).


In essence, access locality is related to measuring the locality for each memory access.
From the five definitions of access locality provided by YUAN et al. \cite{yuan2019relational},
{we use only the definition of \textit{reuse distance sequence} or \textit{reuse distance} (RD) for short because it suffices to demonstrate that mini-batching can improve ensemble implementations' access locality.} The equivalence among the definitions is proven in \cite{yuan2019relational}.

The reuse distance (RD) is defined as ``the number of distinct data accessed since the last access to the same datum, including the reused datum”~\cite{yuan2019relational}.
The reuse distance is $\infty$ for its first access.
For a finite reuse distance, the minimum is 1 (because it includes the reused datum), and the maximum is $m$.
For example, the RD sequence is $\infty \infty \infty$ 333 for \textit{abc abc} and $\infty \infty \infty$ 135 for \textit{abc cba}. 

Before demonstrating the benefits of mini-batching, it is worth noting that stream processing ensembles have two principal operations: the classification (in line 5 of Algorithm \ref{alg:highlevel-new}) and the training (line 12 of Algorithm \ref{alg:highlevel-new}). 
The former reads a few model variables of each classifier, while the latter is (by far) the dominant operation in terms of computational cost because it performs both read and write operations to update the classifier's models.
For this reason, our analysis presented here focus on the training operation.

For the sake of illustration, Table \ref{tab:rd_cache} presents a simple example with an ensemble of $m=4$ learners processing a stream of $n=6$ data items.
Without mini-batching, the processing of the first data item produces a sequence of $m$ occurrences of $\infty$ reuse distance.
However, for finite reuse distance, the minimum reuse distance is 1 because it includes the reused datum, and the maximum is $m$.
Thus,  $\infty$ is shown only for illustration, being ignored in our analysis hereafter.
In this example, for each access $e_i$, the reuse distance {equals} the number of ensembles $m$.
{With mini-batching, each ensemble is accessed once within the mini-batch (with reuse distance $m$) and reused $b-1$ times.}
One could easily realize the benefits of mini-batching by simply substituting every $\infty$ by 1 and calculating the average reuse distance for the two executions.
Next, we demonstrate the benefits. 


\begin{table}[ht]
\advance\leftskip-2,0cm
\caption{Example: A stream of $n=6$ data items being processed by an ensemble of $m$ learners without and with mini-batching.} 
\label{tab:rd_cache}
\begin{tabular}{c p{0.1cm}p{0.1cm}p{0.1cm}p{0.1cm} p{0.1cm}p{0.1cm}p{0.1cm}p{0.1cm} p{0.1cm}p{0.1cm}p{0.1cm}p{0.1cm} p{0.1cm}p{0.1cm}p{0.1cm}p{0.1cm} p{0.1cm}p{0.1cm}p{0.1cm}p{0.1cm} p{0.1cm}p{0.1cm}p{0.1cm}p{0.1cm}}
\multicolumn{25}{c}{RD without mini-batching. A semicolon (;) denotes separation between data items.} \\
\hline 
Access sequence &$e_1$,&$e_2$,&$\dots$,&$e_m$;
&$e_1$,&$e_2$,&$\dots$,&$e_m$;
&$e_1$,&$e_2$,&$\dots$,&$e_m$;
&$e_1$,&$e_2$,&$\dots$,&$e_m$;
&$e_1$,&$e_2$,&$\dots$,&$e_m$;
&$e_1$,&$e_2$,&$\dots$,&$e_m$\\
RD sequence  &$m$,&$m$,&$\cdots$,&$m$;
&$m$,&$m$,&$\cdots$,&$m$;
&$m$,&$m$,&$\cdots$,&$m$;
&$m$,&$m$,&$\cdots$,&$m$;
&$m$,&$m$,&$\cdots$,&$m$;
&$m$,&$m$,&$\cdots$,&$m$
\\ \hline
\\
\multicolumn{25}{c}{RD with mini-batching of size $b=3$. A semicolon (;) separates  mini-batches.} \\
\hline 
Access sequence &
$e_1$,&$e_1$,&$e_1$;&
$e_2$,&$e_2$,&$e_2$;&
$\cdots \;$;&
$e_m$,&$e_m$,&$e_m$;&
$e_1$,&$e_1$,&$e_1$;&
$e_2$,&$e_2$,&$e_2$;&
$\cdots \;$;&
$e_m$,&$e_m$,&$e_m$&
\\
RD sequence  &
m,&1,&1;&
m,&1,&1;&
$\cdots$;&
m,&1,&1;&
m,&1,&1;&
m,&1,&1;&
$\cdots$;&
m,&1,&1
\\
\hline
\end{tabular}
\end{table}

{For the proofs shown in this section, we assume the amount of memory used to implement the ensemble exceeds the cache memory size (which is quite realistic). Otherwise, all accesses will hit the cache, and the order in which memory positions are accessed does not influence cache misses.}


\begin{theorem}
The reuse distance of 
an ensemble of $m$ learners
processing a $n$-length data stream  
is $\mathcal{O}(nm^2)$.  
\end{theorem}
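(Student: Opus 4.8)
The plan is to read off the memory trace induced by Algorithm~\ref{alg:highlevel-new} when no mini-batching is applied, and then sum the per-access reuse distances over the entire trace (so that the quantity bounded by $\mathcal{O}(nm^2)$ is the cumulative reuse distance). First I would model each learner $l_i$ as a single distinct datum $e_i$, so that $M = \{e_1, \dots, e_m\}$ and training on one instance touches all $m$ learners in a fixed order, exactly as in lines~10--13 of the algorithm. Processing the $n$-length stream then generates the trace $N = (e_1 e_2 \cdots e_m)^n$ of length $nm$. The key structural observation is that this trace is periodic with period $m$: every datum $e_i$ recurs exactly once in each block of $m$ accesses.

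Next I would compute the reuse distance of an arbitrary access. In the first block each $e_i$ is seen for the first time, so its reuse distance is $\infty$; following the convention fixed earlier in this section, these $m$ boundary accesses contribute only a bounded, lower-order amount (e.g.\ substituting $\infty$ by $1$). For every later access to some $e_i$, the set of distinct data touched since its previous occurrence is precisely $\{e_1, \dots, e_m\}$, so its reuse distance equals $m$ — which is also the largest possible finite value, since $m = |M|$.

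Summing over the trace gives the bound directly. Of the $nm$ accesses, $m$ are first-time accesses and the remaining $nm - m = m(n-1)$ each have reuse distance exactly $m$, so the cumulative reuse distance is $m(n-1)\cdot m + \mathcal{O}(m) = m^2(n-1) + \mathcal{O}(m) = \mathcal{O}(nm^2)$. Equivalently, since every finite reuse distance is at most $m$ and there are $nm$ accesses, the total is trivially at most $nm\cdot m = nm^2$, and the periodic structure shows this is tight up to constants.

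I expect the only real delicacy — rather than a genuine obstacle — to be the bookkeeping of the $m$ leading $\infty$ entries and making it explicit that each learner is counted as one datum; once the trace is written as $(e_1\cdots e_m)^n$ the remainder is an elementary counting argument. The sharper point worth recording is that the bound is in fact $\Theta(nm^2)$: it is the $m^2$ factor, each of the $nm$ accesses paying a reuse distance of $m$, that the mini-batching analysis will subsequently collapse.
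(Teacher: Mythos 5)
Your proof is correct and takes essentially the same route as the paper's: both model each learner as a single datum, read off the trace $(e_1 e_2 \cdots e_m)^n$ from the training loop, assign reuse distance $m$ to each recurring access, and sum over the $nm$ accesses to obtain $\mathcal{O}(nm^2)$. The only differences are inessential bookkeeping — the paper replaces the $m$ initial $\infty$ distances by $m$ (giving exactly $nm^2$) where you treat them as a lower-order $\mathcal{O}(m)$ term — plus your added, correct, observation that the bound is tight, i.e.\ $\Theta(nm^2)$.
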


\begin{proof}

Consider an ensemble composed of $m$ learners and a data stream composed of $n$ data elements.
Let $e_1,e_2,\dots,e_m$ be the memory locations accessed during a sequential execution of the ensemble to process.
As the model can express arbitrary granularity, {for simplicity, consider that  $e_i$ denotes one access to the data structures} of the {\it i-th} learner $l_i$ of the ensemble. 

The execution of the training operation (line 5 of Algorithm \ref{alg:highlevel-new}) will produce the access sequence $(e_1,e_2,\dots, e_m)$ for each arriving data instance because it invokes all the learners' training in this exact order. 
Thus, the training operation will produce the access sequence $(e_1,e_2,\dots,e_m)^{n}$ for the access stream.
Then, considering finite distances (i.e., substituting $\infty$ by $m$), the reuse distance sequence will be $(m)^{m}$ for all the arriving data items. 
Then, we can sum up the entire sequence to obtain RD as follows: \begin{flalign}  \label{eq:overhead_mapreduce1} 
   RD  =  \sum_{1}^{n} \sum_{1}^{m}m = nmm =  \mathcal{O}(nm^2). 
\end{flalign}
\end{proof}

Next, we can estimate the benefit of mini-batching (as described in Algorithm \ref{alg:batch}) for reducing the reuse distance. 

\begin{theorem}
Mini-batching can reduce the reuse distance of an ensemble implementation by a constant factor.
\end{theorem}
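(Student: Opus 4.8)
The plan is to re-run the summation argument of Theorem 1, but now on the access sequence induced by Algorithm \ref{alg:batch} for a fixed mini-batch size $b$. First I would recall from Table \ref{tab:rd_cache} that, under mini-batching, each learner $l_i$ is trained on all $b$ instances of the mini-batch before the next learner is invoked. This yields, per mini-batch, the access pattern in which each location $e_i$ is touched $b$ times consecutively, namely $(e_1,\dots,e_1,\,e_2,\dots,e_2,\,\dots,\,e_m,\dots,e_m)$, and this block repeats $n/b$ times over the stream (assuming, for simplicity, that $b$ divides $n$).

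Next I would read off the reuse distance of each access. The decisive observation is that the $b$ consecutive accesses to a single learner $e_i$ split into two cases: the \emph{first} access reuses the datum last touched one mini-batch earlier, with exactly $e_1,\dots,e_m$ seen in between, hence reuse distance $m$; the remaining $b-1$ accesses immediately repeat $e_i$, hence reuse distance $1$. Each learner therefore contributes $m+(b-1)$ to the per-mini-batch total, and summing over the $m$ learners and the $n/b$ mini-batches gives
\begin{flalign}
RD_{mb} = \frac{n}{b}\sum_{i=1}^{m}\bigl(m + (b-1)\bigr) = \frac{n\,m\,(m+b-1)}{b} = \frac{nm^2}{b} + nm\,\frac{b-1}{b}.
\end{flalign}
Comparing against the baseline $RD = nm^2$ from Theorem 1 and forming the ratio,
\begin{flalign}
\frac{RD}{RD_{mb}} = \frac{nm^2}{\dfrac{nm^2}{b} + nm\dfrac{b-1}{b}} = \frac{mb}{m+b-1},
\end{flalign}
which depends only on $m$ and $b$ and is independent of the stream length $n$. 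For a fixed ensemble size $m$ this factor is a constant greater than $1$, and as $m$ grows it increases monotonically toward $b$; hence mini-batching reduces the reuse distance by a constant factor (essentially the mini-batch size $b$), establishing the claim.

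The main obstacle I expect is careful bookkeeping at the boundaries. The very first mini-batch has no predecessor, so the initial access to each learner is genuinely $\infty$ and must be replaced by $m$ exactly as in Theorem 1; I would argue this perturbs only a lower-order term and leaves the asymptotic factor unchanged. A second subtlety is making the phrase ``constant factor'' precise: since the exact ratio $mb/(m+b-1)$ varies with both $m$ and $b$, I would emphasise that the relevant point is its independence of $n$ (and its boundedness, tending to $b$ for large $m$), so the reduction is by a factor constant with respect to the length of the data stream.
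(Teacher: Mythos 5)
Your proof is correct and follows essentially the same route as the paper's: the same access sequence $({e_1}^b,{e_2}^b,\dots,{e_m}^b)^{n/b}$, the same per-mini-batch reuse-distance pattern of one access at distance $m$ followed by $b-1$ accesses at distance $1$ per learner, and the same double summation yielding $\frac{n}{b}\,m\,(m+b-1)$, which matches the paper's bound of $\mathcal{O}(\lceil nm^2/b\rceil)$. Your additional step of forming the exact ratio $mb/(m+b-1)$ and noting it tends to $b$ for large $m$ is a sharper statement of what the paper asserts loosely as a reduction ``by a constant of $b$,'' but it is a refinement of the same argument, not a different one.
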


\begin{proof}

Consider an ensemble implementation like Algorithm \ref{alg:batch}, whose computational cost is dominated by the   training phase. With mini-batching, the access sequence will change  from $(e_1,e_2,\dots,e_m)^n$ 
(in Algorithm \ref{alg:highlevel-new}) to 
$({e_1}^b,{e_2}^b,\dots,{e_m}^b)^{n/b}$ 
(in Algorithm \ref{alg:batch}), where $b$ is the mini-batch size.
For each mini-batch, the RD sequence is $(m,1^{b-1})^m$. 
Finally, the RD sequence for the whole stream will be $((m,1^{b-1})^m)^{n/b}$, and the RD can be computed as:
\begin{flalign}  
    \label{proof:batch}
   RD  =  \sum_{1}^{\nicefrac{n}{b}} \sum_{1}^{m} (m+b-1)  = 
   \mathcal{O}(\lceil\frac{nm^2}{b}\rceil).
\end{flalign}
{Hence, the mini-batch can reduce the reuse distance by a constant  of $b$, where $b$ is the mini-batch size.}
\end{proof}

Although our demonstration assumes sequential processing, the result is also valid for the parallel execution proposed in Algorithm \ref{alg:batch}.
Notice that the outer loop in line 11 assigns a different ensemble learner for each processing core, while the innermost loop iterates over the mini-batch data items.
This loop arrangement enables the ensemble to process mini-batches of different sizes {than the one passed as a parameter}.
However, this case can only happen if the stream is terminated with an incomplete mini-batch.
In this case, the mini-batch has to be processed with {fewer} instances than expected.
Thus, each processing core needs to load only one learner model in its memory caches to process the entire mini-batch. 

\begin{table*}[ht]
\centering
\label{tab:rd_parallel}
\caption{Parallel execution of a stream of $n=6$ data items being processed by an ensemble of $m=4$ learners in 3 processors with mini-batch size $b=3$. } 
\label{tab:cache}
\begin{tabular}{cc p{0cm}p{0cm}p{0cm}p{0cm} p{0cm}p{0cm}p{0cm}p{0cm}  p{0cm}p{0cm}p{0cm}p{0cm} p{0cm}p{0cm}p{0cm}p{0cm} p{0cm}p{0cm}p{0cm}p{0cm} p{0cm}p{0cm}p{0cm}p{0cm}}

&\multicolumn{21}{c}{Reuse distance with mini-batching of size $b=3$} \\
\hline 
\multirow{2}{4em}{P1}& 
Access seq. &
$e_1,$&$e_1,$&$e_1,$&
$e_1,$&$e_1,$&$e_1$ &
$e_4,$&$e_4,$&$e_4,$&
$e_4,$&$e_4,$&$e_4,$& \dots
\\
&RD seq.  & 4,&1,&1,&1,&1,&1,&4,&1,&1,&1,&1,&1,& \dots\\
\hline
\multirow{2}{4em}{P2}& 
Access seq. &
$e_2,$&$e_2,$&$e_2,$&
$e_2,$&$e_2,$&$e_2,$& \dots
\\
&RD seq.  &4,&1,&1,&1,&1,&1,& \dots \\

\hline
\multirow{2}{4em}{P3}& 
Access seq. &
$e_3,$&$e_3,$&$e_3,$&
$e_3,$&$e_3,$&$e_3,$& \dots
\\
&RD seq.  &m,&1,&1,&1,&1,&1,&\dots \\
\hline
\end{tabular}
\end{table*}

It is worth noting such results hold regardless of the locality measure used for the demonstration. 
As formally demonstrated by YUAN et al. \cite{yuan2019relational},
locality access measures such as \textit{address independent (AI) sequence},  \textit{reuse interval (RI) sequence}, \textit{per datum sequence of reuse interval (PD-RI)}, and \textit{per datum reuse distance (PD-RD)} are equivalent to each other, and they can be inter-converted.
Using these results, other measures could be seamlessly used to demonstrate that mini-batching improves access locality of the implementation of ensembles for stream processing.
We chose the measure \textit{reuse distance} because of its close relation to cache misses, as the larger the reuse distance, the higher the cache misses.
Cache misses occur when the reuse distance is big enough to fill the cache memory. 

\begin{theorem}
Mini-batching provides optimal access locality for the implementation of ensembles.
\end{theorem}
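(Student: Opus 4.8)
The plan is to read ``optimal access locality'' as the claim that, among all single-pass schedules that keep at most $b$ instances live in the buffer at any instant, the mini-batching access pattern of Algorithm~\ref{alg:batch} minimizes the total reuse distance, matching the bound $\mathcal{O}(\lceil nm^2/b\rceil)$ of the previous theorem. I would therefore prove a lower bound that mini-batching attains, rather than merely recomputing its cost. The first step is to fix a learner $l_i$ and split its $n$ accesses to $e_i$ into maximal runs of consecutive accesses. Because the single-pass requirement discards each instance once its mini-batch is flushed, no run can span more than $b$ instances; hence $l_i$ has at least $\lceil n/b\rceil$ runs, so at most $n-\lceil n/b\rceil$ of its accesses can be reuses with reuse distance $1$.

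Two minimality claims then combine. First, $1$ is the smallest finite reuse distance, and mini-batching realizes reuse distance exactly $1$ for every within-run reuse; no schedule can do better on those accesses, so they are individually optimal. Second, I would bound the run-start reuses: there are only $m$ distinct data, and since every learner touches every buffered mini-batch, consecutive runs of $l_i$ are always separated by accesses to the other learners, pinning each run-start at $\Theta(m)$. Charging $m$ per run-start and $1$ per reuse gives the per-learner total $n+\lceil n/b\rceil(m-1)$; multiplying by the $m$ learners reproduces $\Theta(nm^2/b)$, so mini-batching meets the lower bound to leading order and is order-optimal.

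The hard part is the lower bound on the run-start reuse distances, and here the granularity of the claim matters. A naive argument yields only reuse distance $\ge 2$ at a run-start, which is too weak; the correct bound comes from counting, over one block-to-block transition, the positions of the learners in the two orderings. Writing $a_i$ and $b_i$ for the number of learners following $l_i$ in the first block and preceding it in the second, one has $\sum_i a_i=\sum_i b_i=\tfrac{1}{2}m(m-1)$, and since the distinct separating data number at least $\max(a_i,b_i)\ge\tfrac{1}{2}(a_i+b_i)$, the run-start distances sum to at least $m+\tfrac{1}{2}m(m-1)=\tfrac{1}{2}m(m+1)=\Theta(m^2)$ per transition, independent of the orderings. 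This is the delicate step, and it also shows why I would not claim exact constant-factor optimality: a schedule that reverses the learner order every mini-batch attains $\tfrac{1}{2}m(m+1)$ and so beats the same-order mini-batching value $m^2$ by a constant. Accordingly I would state the result as order-optimality (matching $\Theta(nm^2/b)$), falling back if needed on the unconditional per-reuse statement that every mini-batch reuse already sits at the minimal reuse distance $1$.
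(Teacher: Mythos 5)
Your proposal takes a genuinely different, and more ambitious, route than the paper. The paper's own proof is three lines long: it considers the degenerate case in which a single mini-batch contains the entire stream ($b=n$), observes that each learner's structures are then loaded exactly once (the unavoidable first-access cost of $m$) and that every one of the remaining $b-1$ accesses per learner is an immediate reuse at the floor value $1$, so the total $m(m+b-1)$ coincides with the trivial lower bound (no ordering can make any reuse cost less than $1$) and equals Eq.~\ref{proof:batch} at $b=n$. Your reading---fix the buffer size $b$ and ask for optimality among all single-pass schedules holding at most $b$ live instances---is stronger and closer to how the algorithm is actually used. Your run decomposition, the matching $\Theta(nm+nm^2/b)$ accounting for mini-batching, and especially the observation that reversing the learner order on alternate batches achieves $\tfrac{1}{2}m(m+1)$ per transition versus $m^2$ for same-order mini-batching (so only \emph{order}-optimality, not exact optimality, can hold under your reading) are all correct; that last point exposes a subtlety the paper never confronts, because at $b=n$ there are no batch-to-batch transitions at all.

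The genuine gap is in your lower bound. The $a_i$/$b_i$ counting argument presupposes that the competing schedule is a concatenation of permutation blocks, each learner appearing exactly once per block. A single-pass buffer-$b$ schedule need not have that shape: it may, for instance, pair learners and alternate them instance by instance inside a window ($e_1 e_2 e_1 e_2 \dots$, then $e_3 e_4 e_3 e_4 \dots$), which makes most run-starts of those learners cost $2$ rather than $\Theta(m)$, defeating both the ``pinned at $\Theta(m)$'' claim and the block-transition accounting. (Such schedules do still pay $\Omega(nm^2/b)$ in total, but your argument does not show it.) To close the gap you need an amortized argument that exploits the buffer constraint itself, for example: during any gap between consecutive runs of $l_i$, no instance still awaiting $l_i$ can complete, so at most $O(b)$ instances are live in, or can arrive during, that gap; hence every learner accessed in the gap performs at most $O(b)$ accesses there, and a gap containing $L$ accesses must touch at least $L/O(b)$ distinct learners. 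Summing over gaps and over learners yields $\Omega(nm+nm^2/b)$ for arbitrary admissible schedules, which your upper bound then matches. As written, your proof establishes order-optimality only within the class of block-structured schedules---a class that happens to contain mini-batching and its reversal variant, but not all competitors the theorem must beat.
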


\begin{proof}
The proof is straightforward. 
With mini-batching, at least one mini-batch (of length $n$) is needed to contain all the stream elements. 
{For each ensemble learner $l_i$, processing the first data element $e_1$ in the mini-batch will result in a reuse distance of $m$ because the $l_i$'s data structures are being touched for the very first time. For all the remaining $b-1$ data elements of the mini-batch, the reuse distance will be 1 as it reuses the same datum.} Thus, every learner produces a reuse distance of $m+b-1$. This is a lower bound on the reuse distance as no other order can reduce it.  
For an ensemble of $m$ learners, the total reuse distance will be $m*(m+b-1) = \mathcal{O}(m^2)$ as $b$ is a constant. This is equal to Eq. \ref{proof:batch} when the batch size $b$ is equal to the stream length $n$.

\end{proof}

{Although using only one mini-batch to process the entire data stream is useful for demonstrating that the access locality is optimal, it is not useful in practice.
Notice that the pure stream processing (as in Algorithm \ref{alg:highlevel-new}) performs the classification and training steps for every stream's incoming data item.}
Thus, the learner models continuously evolve, and the processing of every data instance can influence the next incoming data classification.
On the other hand, with mini-batching (as proposed in Algorithm \ref{alg:batch}), the ensemble training is deferred to the end of each mini-batch.
So, setting a mini-batch size of 1 boils down to pure stream processing. In contrast, mini-batches of the same length as the entire data stream {turn} it into a pure batching scheme in which all data instances are classified using models built during an offline training phase that precedes the entire stream.  
In summary, the choice of the mini-batch size raises a trade-off between learning capabilities (with short mini-batches) and computational performance (with larger mini-batches).

Yuan et al. \cite{yuan2019relational} demonstrated that several locality measures are equivalent and may be inter-converted.
However, not all measures can be equally usable for our purpose.
We chose the \textit{reuse distance} because the demonstration that mini-batching leads to optimal locality becomes straightforward with this measure, but not using the footprint or miss ratio, for instance.

\section{Experimental evaluation}
\label{sec:experiments}

To demonstrate the impact of mini-batching for bagging ensembles, we implement the strategy in the Massive Online Analysis (MOA) framework~\cite{bifet2010moa},
and tested its performance on the six ensemble algorithms described in section 
\ref{sec:ensembles}.

To assess the impact of mini-batching, we tested five different configurations of the algorithm:  a  sequential implementation (baseline), a parallel implementation without mini-batching,  and a parallel implementation with mini-batches of varying sizes [50, 500, and 2000]. {All the parallel implementations executed with 8 threads pined to processing cores for better cache usage}.  
{We chose ensembles with 100 and 150 classifiers  similar to other works 
\cite{onl-rf}}. Also, Panda et al.\cite{planet}  demonstrated a reduction in deviance asymptotes with more than 100 classifiers.




\begin{figure*}[!ht]
    \centering
    \advance\leftskip-3cm
    \includegraphics[width=1.5\linewidth]{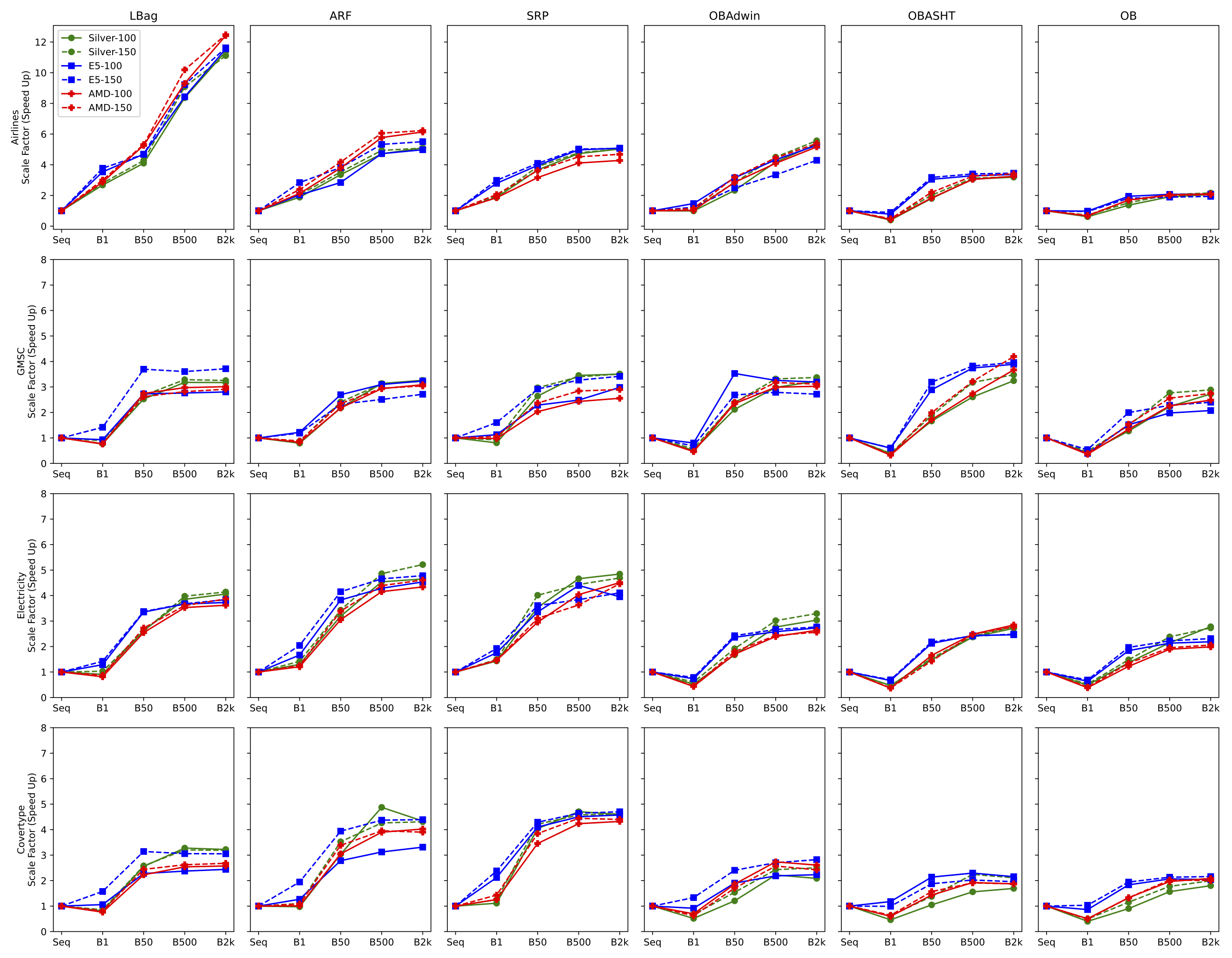}
    \caption{{\bf Speed up on all platforms}. Algorithms are placed on columns, while datasets are placed in different rows of the grid. Suffix 100 and 150 indicate the size of the ensemble, and are represented with solid and dashed lines, respectively. First row Y is scaled to 12, every other row is scaled to 8. Algorithm implementations: ($i$) Baseline (sequential)  (\textbf{Seq}), ($ii$) Parallel  (\textbf{B1}), ($iii$)  Parallel with mini-batches of 50 instances (\textbf{B50}), ($iv$)  Parallel with mini-batches of 500 instances (\textbf{B500}), ($v$)  Parallel with mini-batches of 2,000 instances (\textbf{B2k}).
    }
    \label{fig:scale_xeon}
\end{figure*}

Figure \ref{fig:scale_xeon} presents the speedup achieved in each configuration with 8 cores in each platform.
Results confirm that pure parallelism (without mini-batching) yields bad performance in many configurations. In contrast, performance gains are obtained by combining parallelism with mini-batching due to better memory access patterns. 
Also, speedups are closely related to the models' computational complexity, which varies according to the algorithm and dataset used. 
Ideally, the speedups should approach 8, which is the number of cores used. 
Cheaper algorithms (e.g., OzaBag and OzaBagAdwin) {show lower Speedups due to smaller work per thread}. However, LeveragingBag presented a Speedup of 12X for the Airlines dataset, indicating twofold benefit, i.e., gains due to the parallelism and better memory locality.
The averaged execution times are shown in Tables \ref{tab:exec_time_Silver}, \ref{tab:exec_time_E5} and \ref{tab:exec_time_AMD}.

\begin{table}[ht]
\advance\leftskip-4cm
\begin{tiny}
\caption{Silver}
\begin{tabular}{ll|rrrrrrrrrrrr}
dataset & size & LBag100 & LBag150 & ARF100 & ARF150 & SRP100 & SRP150 & OBAdW100 & OBAdW150 & OBASHT100 & OBASHT150 & OB100 & OB150 \\
\hline
Airlines & B1 & 1214.26 & 1611.74 & 1595.87 & 2228.68 & 1745.86 & 2485.18 & 1040.10 & 1337.43 & 1165.38 & 1499.49 & 512.10 & 699.88 \\
Airlines & B50 & 793.10 & 1056.62 & 895.67 & 1294.31 & 891.80 & 1255.72 & 442.25 & 530.05 & 256.12 & 356.82 & 230.52 & 330.61 \\
Airlines & B500 & 388.75 & 497.99 & 637.78 & 925.47 & 691.25 & 1017.78 & 246.44 & 335.44 & 150.64 & 230.47 & 164.73 & 245.19 \\
Airlines & B2k & 285.79 & 406.85 & 589.97 & 897.89 & 650.19 & 968.36 & 192.63 & 271.18 & 144.85 & 223.60 & 152.63 & 232.81 \\
GMSC & B1 & 176.89 & 247.17 & 222.44 & 311.86 & 385.88 & 526.25 & 95.80 & 141.18 & 80.63 & 111.66 & 100.19 & 143.77 \\
GMSC & B50 & 54.10 & 82.42 & 80.81 & 112.66 & 117.85 & 164.17 & 23.20 & 34.23 & 18.78 & 23.91 & 32.46 & 45.84 \\
GMSC & B500 & 43.01 & 66.92 & 56.08 & 87.09 & 90.20 & 143.46 & 16.49 & 24.62 & 11.97 & 14.13 & 18.46 & 24.66 \\
GMSC & B2k & 43.10 & 67.57 & 53.99 & 84.36 & 88.93 & 138.85 & 15.23 & 24.20 & 9.62 & 12.92 & 15.13 & 23.67 \\
Electricity & B1 & 52.36 & 68.86 & 63.83 & 87.28 & 112.70 & 156.21 & 37.37 & 49.67 & 38.41 & 54.18 & 31.81 & 43.69 \\
Electricity & B50 & 18.08 & 27.30 & 26.02 & 36.09 & 45.68 & 57.97 & 11.71 & 15.68 & 12.26 & 16.32 & 11.28 & 15.53 \\
Electricity & B500 & 12.32 & 17.87 & 18.28 & 25.47 & 34.63 & 52.46 & 7.09 & 9.96 & 7.64 & 10.45 & 7.18 & 9.66 \\
Electricity & B2k & 11.69 & 17.15 & 17.85 & 23.73 & 33.32 & 49.61 & 6.46 & 9.12 & 6.68 & 9.03 & 5.55 & 8.42 \\
Covertype & B1 & 1865.86 & 2601.87 & 1279.79 & 1722.16 & 3652.05 & 4827.71 & 1460.05 & 1986.71 & 1464.17 & 2064.24 & 1410.17 & 1943.88 \\
Covertype & B50 & 591.90 & 863.79 & 410.03 & 537.85 & 1005.78 & 1438.25 & 624.48 & 866.30 & 645.78 & 931.18 & 626.99 & 830.55 \\
Covertype & B500 & 461.46 & 696.23 & 255.34 & 444.90 & 863.56 & 1323.39 & 339.24 & 548.96 & 434.16 & 571.08 & 361.01 & 539.03 \\
Covertype & B2k & 469.40 & 699.06 & 287.04 & 440.19 & 883.68 & 1312.20 & 361.02 & 533.25 & 399.68 & 608.26 & 314.18 & 480.07
\end{tabular}
\label{tab:exec_time_Silver}
\end{tiny}
\end{table}

\begin{table}[ht]
\advance\leftskip-4cm
\begin{tiny}
\caption{E5}
\begin{tabular}{ll|rrrrrrrrrrrr}
dataset & size & LBag100 & LBag150 & ARF100 & ARF150 & SRP100 & SRP150 & OBAdW100 & OBAdW150 & OBASHT100 & OBASHT150 & OB100 & OB150 \\
\hline
Airlines & B1 & 859.97 & 1129.59 & 1336.02 & 1424.37 & 1049.64 & 1467.89 & 648.52 & 1097.30 & 505.86 & 662.43 & 293.57 & 486.94 \\
Airlines & B50 & 648.58 & 910.61 & 953.80 & 1059.42 & 739.67 & 1070.93 & 303.19 & 542.84 & 129.36 & 186.71 & 147.72 & 254.32 \\
Airlines & B500 & 360.56 & 462.20 & 573.81 & 759.21 & 589.29 & 872.08 & 220.77 & 404.44 & 120.14 & 174.19 & 138.67 & 244.79 \\
Airlines & B2k & 263.15 & 366.86 & 544.69 & 734.93 & 575.75 & 866.17 & 181.37 & 315.21 & 117.31 & 171.78 & 137.91 & 238.80 \\
GMSC & B1 & 121.92 & 125.42 & 116.21 & 193.07 & 231.32 & 243.99 & 53.52 & 98.67 & 46.19 & 65.30 & 85.82 & 97.16 \\
GMSC & B50 & 41.55 & 48.09 & 52.68 & 98.85 & 114.11 & 133.92 & 12.15 & 25.22 & 9.59 & 12.39 & 22.26 & 26.41 \\
GMSC & B500 & 41.03 & 49.31 & 45.84 & 91.43 & 105.03 & 119.66 & 13.16 & 24.29 & 7.40 & 10.34 & 17.06 & 23.08 \\
GMSC & B2k & 40.41 & 47.85 & 43.87 & 84.56 & 87.50 & 114.41 & 13.43 & 24.94 & 7.12 & 9.98 & 16.27 & 21.95 \\
Electricity & B1 & 27.94 & 37.12 & 35.70 & 45.05 & 76.28 & 101.48 & 20.00 & 27.66 & 20.65 & 28.76 & 17.67 & 24.55 \\
Electricity & B50 & 10.72 & 15.64 & 15.55 & 22.17 & 39.93 & 54.13 & 6.22 & 8.98 & 6.45 & 9.14 & 6.10 & 8.59 \\
Electricity & B500 & 9.80 & 14.34 & 13.88 & 19.76 & 30.48 & 50.85 & 5.69 & 8.16 & 5.64 & 8.26 & 5.26 & 7.59 \\
Electricity & B2k & 9.64 & 13.74 & 13.12 & 19.25 & 33.90 & 47.55 & 5.38 & 7.87 & 5.57 & 8.03 & 5.13 & 7.30 \\
Covertype & B1 & 1191.05 & 1206.49 & 764.25 & 747.30 & 1637.76 & 2213.71 & 918.64 & 1014.34 & 679.30 & 1157.92 & 668.20 & 893.88 \\
Covertype & B50 & 553.40 & 604.36 & 348.18 & 368.99 & 846.78 & 1227.48 & 441.80 & 562.39 & 374.57 & 612.11 & 314.32 & 475.36 \\
Covertype & B500 & 531.14 & 621.30 & 310.16 & 332.57 & 770.95 & 1134.29 & 384.59 & 500.47 & 348.92 & 567.41 & 278.88 & 432.87 \\
Covertype & B2k & 516.62 & 622.40 & 292.50 & 331.07 & 759.12 & 1120.31 & 376.49 & 479.20 & 371.70 & 586.98 & 282.49 & 428.01
\end{tabular}
\label{tab:exec_time_E5}
\end{tiny}
\end{table}

\begin{table}[ht]
\advance\leftskip-4cm
\begin{tiny}
\caption{AMD}
\begin{tabular}{ll|rrrrrrrrrrrr}
dataset & size & LBag100 & LBag150 & ARF100 & ARF150 & SRP100 & SRP150 & OBAdW100 & OBAdW150 & OBASHT100 & OBASHT150 & OB100 & OB150 \\
\hline 
Airlines & B1 & 965.96 & 1282.34 & 1115.80 & 1528.65 & 1340.47 & 1822.71 & 798.78 & 1059.23 & 872.98 & 1201.77 & 399.64 & 582.92 \\
Airlines & B50 & 526.64 & 724.42 & 626.87 & 873.30 & 785.44 & 1035.46 & 288.92 & 385.94 & 206.76 & 262.11 & 155.92 & 239.81 \\
Airlines & B500 & 297.95 & 377.98 & 411.64 & 602.36 & 603.82 & 830.28 & 201.53 & 276.82 & 124.73 & 177.67 & 130.36 & 203.03 \\
Airlines & B2k & 223.12 & 308.73 & 387.13 & 585.62 & 580.37 & 801.66 & 159.71 & 228.15 & 116.96 & 169.64 & 126.63 & 196.99 \\
GMSC & B1 & 117.02 & 175.56 & 131.92 & 196.91 & 231.88 & 329.48 & 67.87 & 105.91 & 68.12 & 99.15 & 68.46 & 105.50 \\
GMSC & B50 & 32.39 & 52.07 & 50.24 & 73.74 & 111.16 & 146.74 & 13.41 & 21.10 & 12.85 & 16.76 & 18.27 & 26.78 \\
GMSC & B500 & 29.88 & 48.40 & 37.25 & 58.68 & 92.87 & 122.26 & 10.52 & 15.90 & 8.07 & 10.38 & 10.84 & 16.07 \\
GMSC & B2k & 29.51 & 46.80 & 35.39 & 56.95 & 88.39 & 119.77 & 10.40 & 16.25 & 6.01 & 7.95 & 9.86 & 15.00 \\
Electricity & B1 & 34.47 & 47.37 & 40.56 & 57.83 & 78.01 & 112.49 & 25.50 & 35.70 & 30.20 & 42.84 & 20.97 & 29.53 \\
Electricity & B50 & 11.02 & 15.19 & 16.03 & 21.40 & 38.40 & 53.04 & 6.50 & 8.97 & 6.70 & 11.18 & 6.57 & 8.99 \\
Electricity & B500 & 7.95 & 11.48 & 11.78 & 16.52 & 28.11 & 45.35 & 4.62 & 6.63 & 4.46 & 6.46 & 4.24 & 6.18 \\
Electricity & B2k & 7.75 & 10.70 & 11.28 & 15.74 & 25.17 & 36.90 & 4.19 & 6.30 & 3.89 & 5.77 & 4.04 & 5.86 \\
Covertype & B1 & 1147.01 & 1661.16 & 757.98 & 1075.69 & 2230.79 & 3027.27 & 881.92 & 1380.66 & 871.84 & 1253.06 & 849.56 & 1269.25 \\
Covertype & B50 & 392.27 & 551.63 & 250.95 & 343.99 & 811.74 & 1126.28 & 329.01 & 502.55 & 363.89 & 511.11 & 324.02 & 470.59 \\
Covertype & B500 & 344.03 & 513.61 & 196.30 & 294.35 & 662.59 & 974.71 & 224.20 & 332.88 & 270.88 & 419.57 & 218.84 & 308.24 \\
Covertype & B2k & 339.32 & 502.85 & 190.23 & 298.38 & 648.96 & 984.36 & 235.28 & 353.36 & 278.17 & 428.52 & 206.23 & 311.84
\end{tabular}
\label{tab:exec_time_AMD}
\end{tiny}
\end{table}


As a major result, experiments show that mini-batching combined with multicore parallelism improves the performance of ensembles. In general, {introducing small mini-batches of up to 50 elements improves data reuse and reduces cache misses}. However, large mini-batches tend to increase the number of cache misses. {Such an increase happens because the larger the mini-batch}, the more heterogeneous the data instances. More heterogeneity implies that the {same classifier will access different paths} of the Hoeffding trees.

\subsection{Reuse distance analysis}
\label{sec:reuse_distance}

Next, we present experimental results that empirically demonstrate that mini-batching improves memory locality. However, we used the reuse distance histogram because it can be efficiently obtained by instrumenting the application \cite{yuan2019relational}. 
Reuse distance directly relates to cache performance, and the reuse distance histogram is a compact summary. In cache analysis, the RD histogram is related to the miss ratio of the cache. The lowest the frequency of large reuse distances, the fewer cache misses, and the better performance. 
{First, we instrumented the ensemble code to track the access order of the  ensemble learners  accessed}.
Then, we run experiments with streams of length $n = 5000$, an ensemble of $m = 100$ learners, and varied the mini-batch size $b$ as $[1, 10, 50, 100, 250]$.
The parameter $\lambda$ of the Poisson distribution used by the ensembles to process the stream elements also affects the memory access behavior. More precisely, it affects the probability of each learner of the ensemble performing the training phase.

At this point, we can group the algorithms in two categories regarding the parameter $\lambda$. The first category comprises the algorithms LBag, ARF, and SRP that use $\lambda=6$, and whose results are shown in Figure \ref{fig:RD-hist-LBag}. {We reported only one representative of the group (LBag) as the others presented similar behavior}. 
The second category includes OB, OBAdwin, and OBASHT and uses $\lambda=1$ is presented in Fig. \ref{fig:RD-hist-OzaBag}. OB was chosen as the representative, as the others behaved likewise.

\begin{figure}[ht]
   \centering
     \includegraphics[width=0.9\columnwidth]{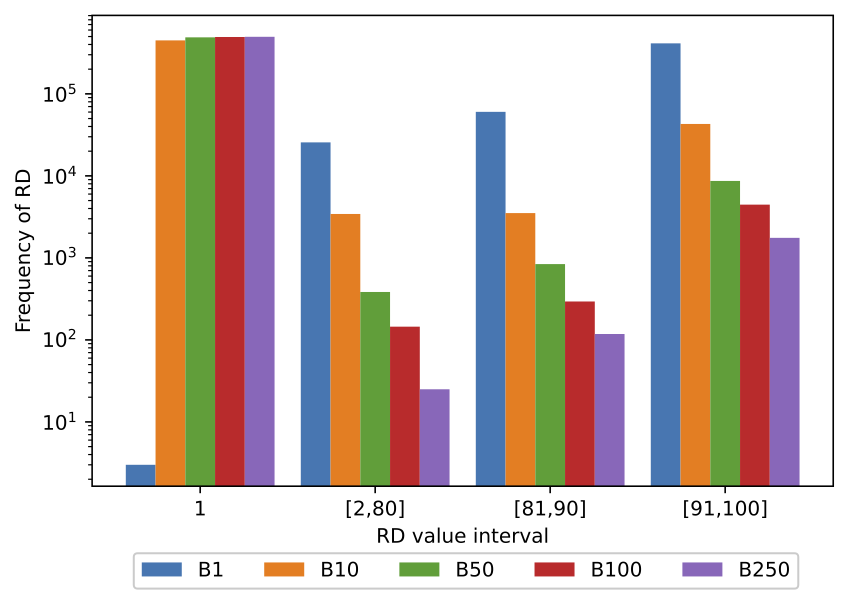} 
    \caption{RD histogram for LeveragingBag.}
    \label{fig:RD-hist-LBag}
\end{figure}
\begin{figure}[ht]
    \centering
    \includegraphics[width=0.9\columnwidth]{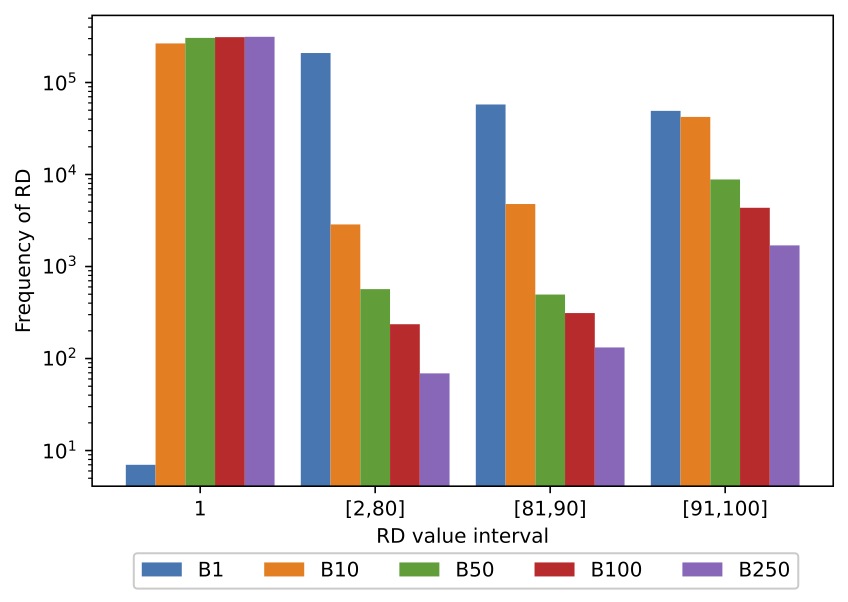} 
    \caption{RD histogram for OzaBag.}
    \label{fig:RD-hist-OzaBag}
\end{figure}

In these figures, each implementation of the same algorithm is shown by a vertical bar.
The X-axis ticks show the RD value interval in log scale, while the Y-axis shows the count of the frequencies of a given interval in the trace. 
Both figures present a very similar behavior on the mini-batch approaches. When using mini-batch, most of the RDs are 1, and the rest is in the interval [91,100]. 
Bigger mini-batch sizes present a higher RD 1 frequency and a smaller RD frequency in the interval [91,100].
The difference between the smaller and bigger mini-batch sizes is negligible compared to the changes resulting from mini-batches and the Sequential approach.

{Figure \ref{fig:RD-hist-LBag} shows the histogram for the parallel implementation (without mini-batching) and the implementations with mini-batches of different sizes (B10 to B250). Notice that the larger the mini-batch size, the less frequent the high reuse distances. For instance, near 83\% of the RD for the parallel implementation falls in the range [91,100].}
With mini-batches of size 10 (B10), high RDs' frequency decreases to near 8.6\%. With the largest mini-batch size (n=250), only 0.35\% of the RD fall in the range of [91,100]. Thus, we can conclude from this experiment that the larger the mini-batch, the fewer cache misses and the better performance.

\subsection{Memory footprint}
\label{subsec:memory}

Footprint shows the total amount of memory needed to execute the program. Next, we analyze memory footprint and cache use during the execution of ensembles of 100 learners.

\begin{table}[ht]
    \centering
    \caption{Memory footprint (in GB) for each pair (dataset, algorithm). Algorithms are ensembles with 100 learners. \vspace{0,5cm}}
    \label{tab:footprint}
    \begin{tabular}{r|cccccc|c}
         
          & \multicolumn{6}{c|}{Total footprint per algorithm}& Instance \\
            
         Dataset & ARF & SRP & LBag & OBAdwin & OBASHT & OB & size\\
         \hline
         Airlines & 20.11 & 19.31 & 20.93 & 19.24 & 14.67 & 18.42 & $4.71 \times10^{-5}$\\
         GMSC & 13.15 & 13.80 & 9.15 & 3.05 & 1.90 & 2.99 & $0.17 \times10^{-5}$\\
         Electricity & 9.86 & 14.78 & 7.75 & 1.47 & 1.03 & 1.15 & $0.22 \times10^{-5}$\\
         Covertype & 16.15 & 17.69 & 16.84 & 2.94 & 2.45 & 3.13 & $2.15 \times10^{-5}$\\
    \end{tabular}
\end{table}
Table \ref{tab:footprint} shows the maximum memory footprint for each pair (dataset, algorithm). The rightmost column shows the instance size for each dataset expressed in GB.   
Values represent the average of three executions.
The Airlines dataset {has the largest memory footprint} on all algorithms.
In special, LBag has {the largest footprint for the Airlines} dataset.
The combination of LBag and airlines dataset is the one that presents both the largest memory footprint and the largest Speedup, suggesting that its superlinear Speedup is more closely related to the improved memory usage and data locality. The mini-batching does not modify the footprint, as it only changes the ordering of access to the data structures, while the number of accesses remains unchanged.

\subsection{Cache misses}

Table \ref{tab:perf-cache} shows two measures counters related to cache use provided by the perf tool\footnote{\url{https://man7.org/linux/man-pages/man1/perf.1.html}}:
\begin{itemize}
    \item {\it Cache-references}: accounts for data requests missed in the L1 and L2 caches. Whether they miss the L3 is irrelevant in this case; 
    \item {\it Cache-misses}: this event represents the number of memory access that could not be served by any of the cache levels, therefore having to fetch data from the main memory.
\end{itemize}
The difference between the two is the number of {L3 hits}.
Although the two measures change according to the dataset size, both tend to decrease with mini-batching and larger batch sizes. 
For the  Electricity and Covertype case, the {\it cache\_refer} starts to rise with mini-batches of 2000 instances, suggesting the existence of an optimal mini-batch size. 
Results show clearly that mini-batching improves the memory accesses locality, thus reducing cache misses.
Originally, the data structures of each learner were accessed once (then discarded) for each data instance of the mini-batch.

\begin{table}[ht]

\advance\leftskip-1,0cm
\begin{tiny}
\caption{Measures of cache usage for ensembles with 100 learners (Millions)}
\label{tab:perf-cache}
\begin{tabular}{c|r|p{1,0cm}p{1,0cm}|p{1,0cm}p{1,0cm}|p{1,0cm}p{1,0cm}|p{1,0cm}p{1,0cm}}
&  & \multicolumn{2}{c}{Airlines} & \multicolumn{2}{c}{GMSC} & \multicolumn{2}{c}{Electricity} & \multicolumn{2}{c}{Covertype} \\
Algorithm & MB size & cache-miss & cache-refer & cache-miss & cache-refer & cache-miss & cache-refer & cache-miss & cache-refer \\
\hline
ARF & 1 & 40,171 & 94,910 & 2,518 & 11,366 & 882 & 4,490 & 12,652 & 65,321 \\
ARF & 50 & 41,634 & 63,303 & 2,499 & 4,825 & 821 & 2,323 & 13,325 & 23,201 \\
ARF & 500 & 42,321 & 62,185 & 2,162 & 4,394 & 742 & 2,040 & 12,315 & 21,246 \\
ARF & 2000 & 42,522 & 61,728 & 2,047 & 4,369 & 728 & 2,293 & 12,367 & 21,912 \\ \hline
LBag & 1 & 45,337 & 99,010 & 2,600 & 8,962 & 508 & 2,870 & 17,809 & 104,735 \\
LBag & 50 & 49,425 & 74,854 & 1,680 & 3,746 & 516 & 1,706 & 16,560 & 47,024 \\
LBag & 500 & 26,659 & 37,783 & 1,645 & 3,497 & 473 & 1,457 & 19,315 & 45,322 \\
LBag & 2000 & 19,556 & 26,152 & 1,546 & 3,714 & 463 & 1,309 & 21,342 & 48,600 \\ \hline
SRP & 1 & 45,135 & 110,900 & 5,543 & 18,487 & 2,105 & 7,520 & 65,157 & 172,089 \\
SRP & 50 & 46,647 & 68,340 & 5,285 & 8,682 & 1,999 & 3,892 & 61,763 & 97,867 \\
SRP & 500 & 45,973 & 67,255 & 4,781 & 7,750 & 1,952 & 4,296 & 60,210 & 95,699 \\
SRP & 2000 & 45,973 & 66,395 & 4,559 & 7,912 & 1,863 & 3,916 & 60,906 & 99,117 \\ \hline
OBASHT & 1 & 4,779 & 39,986 & 531 & 4,399 & 225 & 1,714 & 5,927 & 101,370 \\
OBASHT & 50 & 3,918 & 10,629 & 399 & 1,262 & 171 & 781 & 5,286 & 40,059 \\
OBASHT & 500 & 3,810 & 9,953 & 353 & 1,033 & 157 & 717 & 4,648 & 36,992 \\
OBASHT & 2000 & 3,579 & 9,603 & 334 & 1,090 & 155 & 761 & 4,302 & 39,074 \\ \hline
OBAdwin & 1 & 26,627 & 71,987 & 723 & 5,770 & 232 & 2,037 & 5,780 & 108,281 \\
OBAdwin & 50 & 20,338 & 30,542 & 439 & 1,539 & 183 & 910 & 4,687 & 37,948 \\
OBAdwin & 500 & 15,417 & 21,888 & 419 & 1,357 & 177 & 872 & 5,576 & 35,341 \\
OBAdwin & 2000 & 11,669 & 16,414 & 371 & 1,427 & 149 & 915 & 6,228 & 33,759 \\ \hline
OB & 1 & 9,423 & 27,560 & 981 & 5,580 & 221 & 1,864 & 11,314 & 94,976 \\
OB & 50 & 9,810 & 13,606 & 635 & 1,853 & 180 & 735 & 9,683 & 36,822 \\
OB & 500 & 9,504 & 12,468 & 421 & 1,531 & 173 & 738 & 7,983 & 32,385 \\
OB & 2000 & 8,965 & 12,299 & 353 & 1,386 & 155 & 793 & 7,141 & 32,146
\end{tabular}
\end{tiny}
\end{table}

To conclude, the results presented in this section demonstrate that
mini-batching can significantly improve the performance of parallel implementations of 
ensembles on multicore systems. 
{We used three different measures to demonstrate the benefits of mini-batching on memory access}: ($i$) reduction in the reuse distances demonstrated in section \ref{sec:background}; ($ii$) reduction of the frequency of large reuse distance demonstrated by the reuse distance histograms in Section \ref{sec:reuse_distance}, and ($iii$) reduction of cache misses in the L1 and L2, as demonstrated by the {\it cache-refer} event counter. All these demonstrations using different measures (whose equivalence was proved in \cite{yuan2019relational})  corroborate each other to show the benefits of mini-batching.

{Experiments in section 7.1} evaluate the influence of the  increase of the mini-batch size on the speedup. Results show that the increase of the mini-batch size up to 50 instances yields a significant increase in speedups, while the increase of the mini-batch beyond this size leads to slight speedup increases. 
Similarly, experiments in section 7.2 demonstrate that mini-batches of up to 50 instances still {yield} a significant reduction {in RD. In comparison, mini-batches} larger than 50 instances presented a slight reduction in RD. 
{Likewise, Table 11 (in section 7.4)  shows similar results regarding the mini-batch size, using the cache-references measure}. 

As demonstrated in our experiments, the mini-batching can alleviate the memory bottleneck and increase the speedup to some extent. Any increase {in processing cores} beyond this threshold would hardly increase the speedup because of the memory bottleneck. {To achieve} even higher scalability, further strategies are necessary. {For instance, a promising research direction is to design algorithms that work under more strict memory constraints}, whose data structures can live in the cache memories.

Regarding general guidelines of a standard setting of the mini-batch size, the empirical results showed that the increase {in the mini-batch size yields significant} performance improvements for mini-batches up to 50 instances. After this threshold, experiments show diminishing returns. However, the increase of mini-batch size can impact the predictive performance. {We investigate this impact  in the next section}. 

\section{Impact of mini-batching on predictive performance } 
\label{sec:batch-size}

Streaming ensemble algorithms operate in a sample-wise mode, i.e., the classifiers are up-to-date any time after each sample and can immediately predict the next. In contrast, with the introduction of mini-batching, 
{first, the} classifiers output predictions for all the instances of the new mini-batch and then  train (i.e., update the models) to classify the next mini-batch, and so on. 
So, what is the cost of deferring the training  on the predictive classification? Furthermore, does the mini-batch size impacts the predictive performance? Next, we empirically address these questions.

\subsection{The impact of deferred updates}

To evaluate the impact of mini-batching (and its deferred training) on the predictive performance, we must ensure that the proposed algorithm is as close as possible to the baseline.
One aspect that can produce {a} discrepancy in predictive performance {is using} a different sequence of pseudo-random numbers to compute weights in the  Poisson distribution.
To guarantee the same pseudo-random sequence, we used the same seed and assigned the weights in the same order (i.e., sequentially).
Thus, the only difference between the baseline and our mini-batching implementation is the deferred training. {We used seven mini-batch sizes (25, 50, 100, 250, 500, 1000, and 2000)}. We compare the precision and recall of the mini-batching and the baseline implementations.
Figure \ref{fig:accxsize} shows the precision and recall for each combination of dataset and algorithm.

\begin{figure}[hbt]
    \centering
    \includegraphics[width=9cm, height=11cm]{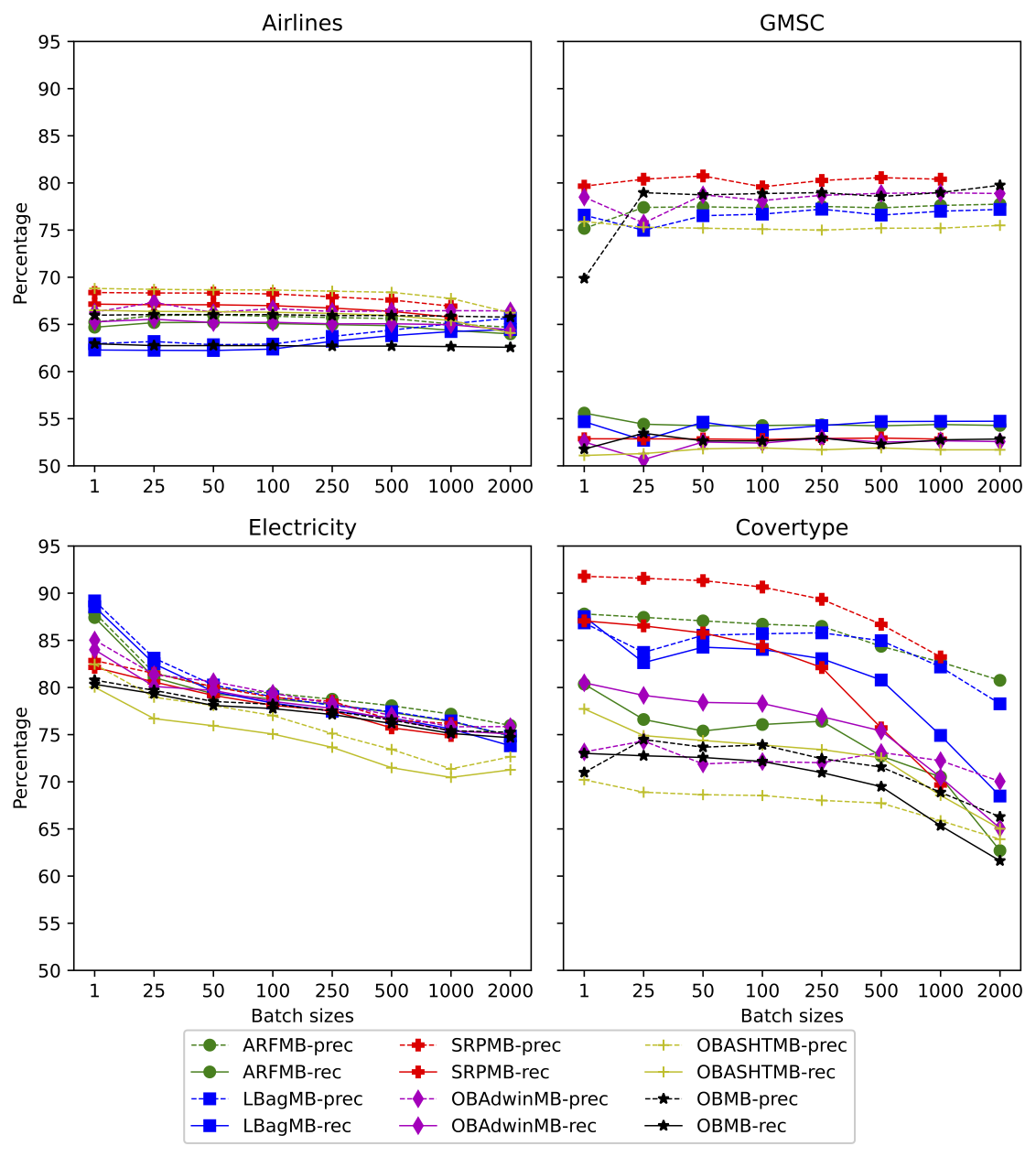}
    \caption{
    The {\it precision} and {\it recall} measures  for each algorithm grouped by datasets.
    The {Y-axis} shows predictive performance as {a} percentage value. The {X-axis} shows the mini-batch size. 
    Solid lines are used for {\it recall} and dashed lines for {\it precision}. 
    }
    \label{fig:accxsize}
\end{figure}

We can observe two distinct behaviors. The increase in the mini-batch size has {a} low impact on predictive performance for the datasets Airlines and GMSC. In contrast, a more significant decrease in predictive performance occurs with Covertype and Electricity. 
Results show that the impact on the predictive performance is more influenced by the dataset characteristics than by the algorithms. 

\subsection{The impact on change detection}

Additional experiments using LBag and OBAdwin were carried out to track the number of changes detected in each dataset. As a general remark, the number of changes detected decreases as the mini-batch size increases. Small mini-batches ({i.e.,} less than 50 instances) deviate from this behavior, as the algorithms detect more changes than baseline.

The spike in the number of changes is caused by the lower accuracy, which is related to the slower pace that the models are trained.
The behavior described can be viewed in Figure \ref{fig:CD-lines} and they can help explain some prediction results from Figure \ref{fig:accxsize}.

\begin{figure}[!hbt]
    \centering
    \includegraphics[width=0.65\columnwidth]{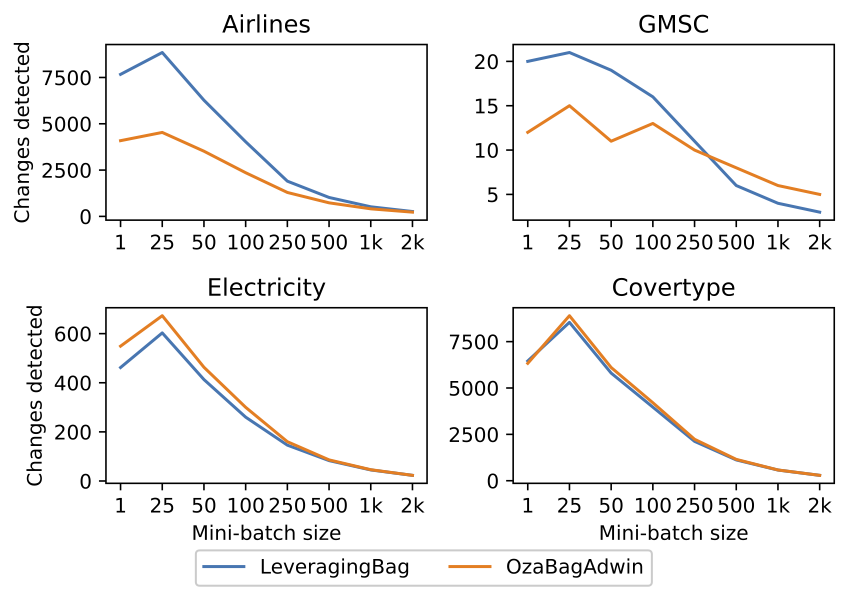} 
    \caption{Changes detected by algorithms LeveragingBag (LBag) and OzaBagAdwin (OBA) for each dataset according to the mini-batch size.}
    \label{fig:CD-lines}
\end{figure}

{The GMSC dataset} shows a minimal amount of changes detected, indicating that data distribution is stable and consistent. The consequence is that the ensemble models are rarely replaced, as only 20\% of the ensemble is replaced over the full length of the 150,000 instances. On the other hand, the Electricity dataset presents over 600 changes detected for the same ensemble size (100) {and only a third (45,312) of the instances, which means the ensemble replaced each model 6 times. Such replacement} indicates that the Electricity dataset has many data distribution changes. Therefore models become obsolete quicker and need to be replaced. This behavior explains why GMSC has a very stable predictive performance {while Electricity's predictive performance deteriorates} on all algorithms as the mini-batch size increases.

The number of changes detected is similar in {the Airlines} and {the Covertype} datasets. However, {the predictive performance is constant in the Airlines dataset while it deteriorates in the Covertype dataset}. {One difference is that} the Airlines dataset is a binary class dataset while the Covertype dataset is multiclass. Another difference is that the Airlines dataset has two nominal attributes with many values, {which tricks the classifier into performing splits in these attributes,} as shown in~\cite{feat-score}. The mini-batch impact in prediction is minimal in the Airlines dataset because the initial prediction is already low. On the other hand, {the impact on Covertype dataset} is more noticeable because the original prediction (in the incremental setting) is better and deteriorates with mini-batches.

Figure \ref{fig:CD-lines} shows an increase in changes detected for small batches up to 25 {instances}. This behavior happens in the beginning when the models are untrained and thus have a high error rate, triggering change detections more often. As the stream progresses, the models grow and become more capable of recognizing the classes, thus improving this behavior. 
In larger mini-batches, there are significantly {fewer} opportunities to trigger change detection, thus decreasing the predictive performance.

The experiments presented so far revealed a trade-off between speedup and predictive performance. Experiments presented in Section 6 show that larger speedup increases come up to mini-batches of {50 instances. In comparison, experiments} in this Section show a significant loss in predictive performance for mini-batches larger than 100 instances. 
Thus, a general guideline for a good compromise between speedup and predictive performance is to use mini-batches of 50 to 100 instances. 
The optimal choice for the mini-batch size can change depending on the {data characteristics} (the tuning of this parameter for specific datasets is out of this work's scope). However, as we observe from experiments shown in Section 8.1, the impact on  predictive performance has low sensitivity changes in the mini-batch size for most datasets. 
Thus, {for situations with no previous knowledge} about data characteristics, using mini-batches of 50 instances can  be a conservative initial setup for the algorithms. 

\section{Conclusion}
\label{sec:conclusions}

Ensemble learning is a fruitful approach to improve the performance of ML models by combining several single models. Examples of this class include the algorithms Adaptive Random Forest, Leveraging Bag, and OzaBag. This approach is also popular in a data stream processing context. Despite their importance, many aspects of their efficient implementation remain to be studied. 
This paper highlighted a performance bottleneck of multi-core implementations of bagging ensembles and proposed a mini-batching strategy to improve the memory access pattern's locality and reduce processing time. We demonstrated through theoretical and experimental frameworks that the performance achieved by multicore parallelism {could} be remarkably improved (speedups of up to 5X on 8-core processors) through the application of this mini-batching technique.
We observed that the choice of the mini-batch size {could} raise a trade-off between computational performance and predictive performance. 
{Our experiments with six bagging ensemble methods  and four datasets showed a good compromise solution  around mini-batches of 50 instances}.
However, the choice of {the} best mini-batch sizes may depend on the application scenario.
As a final comment, we believe that mini-batching can support manifold performance improvements {to implementations of bagging ensembles}. 

As future work, we intend to investigate how mini-batching can improve {the} energy efficiency {of bagging ensembles}. 
Furthermore, the use of varying size mini-batches, {i.e.,} which can be dynamically adjusted at run-time according to some variable parameters ({e.g.,} the incoming rate of instances, the delay {in processing each} instance in the current window, the data characteristics, the accuracy measured at the output during a time window, to cite a few) can be an interesting direction for future research.

\section*{Acknowledgements}
This study was financed in part by the Coordenação de Aperfeiçoamento de Pessoal de Nível Superior - Brasil (CAPES) - Finance Code 001, and Programa Institucional de Internacionalização – CAPES-PrInt UFSCar (Contract 88887.373234/2019-00). Authors also thank Stic AMSUD (project 20-STIC-09), and FAPESP (contract numbers  2018/22979-2, and 2015/24461-2) for their support.
Partially supported by the TAIAO project CONT-64517-SSIFDS-UOW (Time-Evolving Data Science / Artificial Intelligence for Advanced Open Environmental Science) funded by the New Zealand Ministry of Business, Innovation, and Employment (MBIE). URL https://taiao.ai/.


\bibliography{sample-base}

\end{document}